\documentclass[lettersize,journal]{IEEEtran}
\usepackage{amsmath,amsfonts}

\DeclareMathOperator*{\argmin}{arg\,min}

\usepackage{algorithm}
\usepackage{algpseudocode}
\usepackage{array}
\usepackage[caption=false,font=normalsize,labelfont=sf,textfont=sf]{subfig}
\usepackage{textcomp}
\usepackage{stfloats}
\usepackage{url}
\usepackage{verbatim}
\usepackage{graphicx}
\usepackage{subcaption}
\usepackage{cite}
\usepackage{amsthm} 
\usepackage{booktabs}

\theoremstyle{plain}
\newtheorem{theorem}{Theorem}[section]

\theoremstyle{definition}

\theoremstyle{remark}
\newtheorem{remark}{Remark}

\usepackage{hyperref}
\hypersetup{
    colorlinks=true,
    linkcolor=blue,
    filecolor=blue,      
    urlcolor=blue,
    citecolor=blue,
}

\hyphenation{op-tical net-works semi-conduc-tor IEEE-Xplore}

\begin{document}

\title{Coverage-Recon: Coordinated Multi-Drone Image Sampling with Online Map Feedback} 


\author{Muhammad Hanif$^{1\href{https://orcid.org/0009-0001-0475-8711}{\includegraphics[scale=0.2]{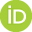}}}$,~\IEEEmembership{Graduate Student         Member,~IEEE}, 
        Reiji Terunuma$^{1\href{https://orcid.org/0009-0009-1252-0592}{\includegraphics[scale=0.2]{images/orcid_32x32.png}}}$,~\IEEEmembership{Graduate Student Member,~IEEE},
        Takumi Sumino$^{1\href{https://orcid.org/0009-0006-2382-9841}{\includegraphics[scale=0.2]{images/orcid_32x32.png}}}$,  
        Kelvin Cheng$^{2\href{https://orcid.org/0000-0003-2779-9150}{\includegraphics[scale=0.2]{images/orcid_32x32.png}}}$, 
        and Takeshi Hatanaka$^{1\href{https://orcid.org/0000-0003-3143-121X}{\includegraphics[scale=0.2]{images/orcid_32x32.png}}}$,~\IEEEmembership{Senior Member,~IEEE}%
        
\thanks{This work was supported in part by the Japan Society for the Promotion of Science (JSPS) KAKENHI under Grant 24K00906.}%
\thanks{$^{1}$Muhammad Hanif, Reiji Terunuma, Takumi Sumino, and Takeshi Hatanaka are with the School of Engineering, Institute of Science Tokyo, Tokyo 152-8552, JAPAN. 
        {\tt\small \{hanif@hfg., terunuma@hfg., sumino@hfg., hatanaka@\}sc.e.titech.ac.jp}}%
\thanks{$^{2}$Kelvin Cheng is with the Rakuten Institute of Technology, Rakuten Group, Inc., Tokyo, JAPAN.
        {\tt\small kelvin.cheng@rakuten.com}}%
}


\markboth{IEEE TRANSACTIONS ON CONTROL SYSTEMS TECHNOLOGY}%
{Shell \MakeLowercase{\textit{et al.}}: A Sample Article Using IEEEtran.cls for IEEE Journals}


\maketitle

\begin{abstract}  
This article addresses collaborative 3D map reconstruction using multiple drones. Achieving high-quality reconstruction requires capturing images of keypoints within the target scene from diverse viewing angles, and coverage control offers an effective framework to meet this requirement. Meanwhile, recent advances in real-time 3D reconstruction algorithms make it possible to render an evolving map during flight, enabling immediate feedback to guide drone motion. Building on this, we present Coverage-Recon, a novel coordinated image sampling algorithm that integrates online map feedback to improve reconstruction quality on-the-fly. In Coverage-Recon, the coordinated motion of drones is governed by a Quadratic Programming (QP)-based angle-aware coverage controller, which ensures multi-viewpoint image capture while enforcing safety constraints. The captured images are processed in real time by the NeuralRecon algorithm to generate an evolving 3D mesh. Mesh changes across the scene are interpreted as indicators of reconstruction uncertainty and serve as feedback to update the importance index of the coverage control as the map evolves. The effectiveness of Coverage-Recon is validated through simulation and experiments, demonstrating both qualitatively and quantitatively that incorporating online map feedback yields more complete and accurate 3D reconstructions than conventional methods.  

\noindent\textbf{Project page:} \url{https://htnk-lab.github.io/coverage-recon/}  
\end{abstract}

\begin{IEEEkeywords}
Coverage Control, 3D Reconstruction, Multi-Drone System, NeuralRecon, QP-based Control.
\end{IEEEkeywords}

\section{Introduction}


\IEEEPARstart{M}{ulti-robot} systems (MRS) have gained significant attention as a promising robotic framework, offering efficient, scalable, and resilient solutions compared to single-robot systems~\cite{cortes2017coordinated}. They have been applied in diverse domains, including precision agriculture~\cite{edmonds2021efficient, albani2019field},
environmental monitoring~\cite{toyomoto2025constraint, martin2023predictive}, and safety-critical missions such as wildfire tracking, search-and-rescue, and infrastructure inspection~\cite{seraj2020coordinated, alotaibi2019lsar, queralta2020collaborative, 
daftry2015building}. 
Beyond these traditional domains, collaborative 3D map reconstruction has also emerged as a particularly impactful application, where multiple drones capture images from diverse viewpoints to build high-quality 3D models that are crucial for spatial understanding and decision-making~\cite{comba2018unsupervised, gupta2018application}.

In collaborative 3D map reconstruction, coordination during the image sampling process is critical for achieving accurate and reliable reconstructions. Here, there are two important aspects to be considered \cite{lu2024angle}. Firstly, each region of the scene should be observed from multiple viewpoints to ensure robustness and geometric accuracy~\cite{maboudi2023review}, which relies on overlapping camera poses ~\cite{smith2018aerial}. Secondly, the one-time-visit constraint must be satisfied~\cite{shimizu2021angle}, meaning that images are systematically acquired so the entire environment is effectively sampled within a single mission.   

\begin{figure*}[t]
    \centering
    \includegraphics[width=0.78\textwidth]{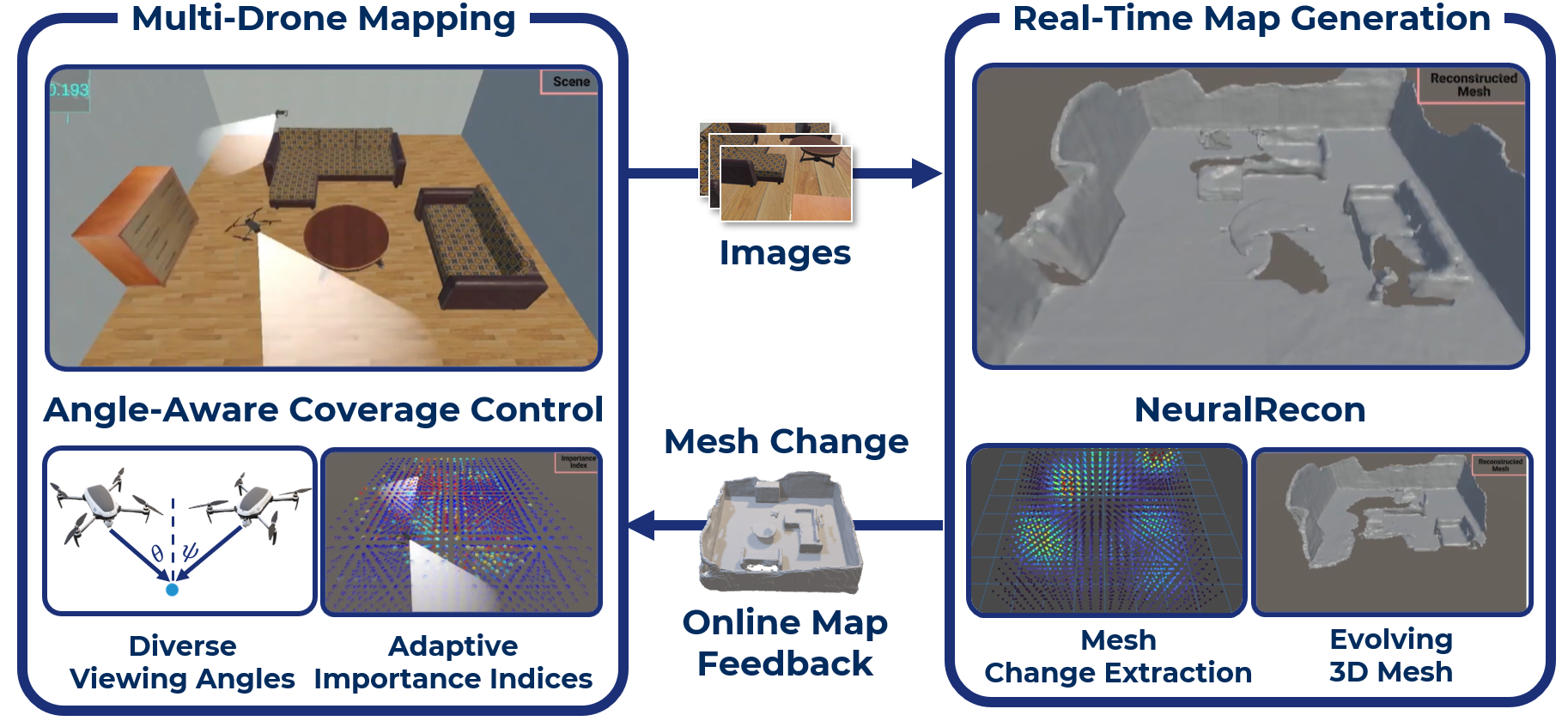}
    \caption{Coverage-Recon enables autonomous multi-drone reconstruction through a closed feedback loop between image sampling and mapping. Left: drones use angle-aware coverage control with a QP-based controller to capture diverse viewpoints. Right: NeuralRecon~\cite{sun2021neuralrecon} generates an evolving 3D mesh that guides drones to focus on under-reconstructed regions.}
    \label{fig:coverage-recon}
\end{figure*}

Some early approaches relied on preset path planning strategies, such as lawnmower or zigzag patterns~\cite{tmuvsic2020current, torres2016coverage}. While simple, these methods do not incorporate camera orientation planning, leading to poor viewpoint diversity and uneven sampling, such as oversampling rooftops while undersampling facades~\cite{maboudi2023review}. They also lack the flexibility to adapt to unexpected disturbances during flight. To address these issues, adaptive methods have been proposed, including viewpoint selection~\cite{delmerico2018comparison,bircher2016receding} to maximize information gain and trajectory optimization~\cite{song2017online, roberts2017submodular} to reduce reconstruction uncertainty. However, most rely on prior or partial maps knowledge and are often implemented in centralized frameworks, which limits scalability in multi-drone deployments.

As a distributed alternative, coverage control has been widely studied for coordinating multiple agents. Classical Voronoi-based methods~\cite{cortes2004coverage, schwager2011eyes} drive robots toward optimal configurations according to prescribed importance indices defined over a space. However, these methods often lead to static configurations and cannot ensure continuous coverage over time. Persistent coverage control addresses this by using time-varying importance indices~\cite{hubel2008coverage,dan2020control}, but it remains limited to 2D and lacks the viewpoint diversity needed for 3D reconstruction. To overcome this, angle-aware coverage integrates drone position and camera orientation into a five-dimensional formulation~\cite{shimizu2021angle, hanif2024efficient}, enabling diverse viewpoints and improved reconstruction~\cite{suenaga2022experimental}. More recently, angle-aware coverage has also been extended to include gimbal-mounted cameras, allowing explicit orientation control independent of body motion~\cite{lu2024angle}. 

While angle-aware coverage strategies broaden the range of achievable viewpoints, their importance indices evolve independently of reconstruction progress and therefore cannot adapt to the evolving state of the map. This stems from traditional offline reconstruction pipelines, such as Structure from Motion (SfM) and Multi-View Stereo (MVS)~\cite{schonberger2016structure,  furukawa2009accurate},
which require the complete set of images before reconstruction begins. As a result, map quality cannot be assessed during flight, leaving the sensing mission blind until post-processing. To mitigate this drawback, some works have introduced multi-stage planning strategies that incorporate partial reconstruction results into subsequent flights. For example, Plan3D~\cite{hepp2018plan3d} uses an initial exploratory mission to generate a sparse model, followed by optimized paths for refinement, while CEO-MLCPP~\cite{lee2022ceo} applies model-based optimization to select additional viewpoints after an initial mission. These methods improve final map quality but require multiple deployments, increasing time, cost, and complexity, making them less suitable for time-critical or resource-limited applications.

Recent breakthroughs in learning-based 3D reconstruction have introduced a new paradigm for online, iterative map generation. Methods such as NeuralRecon~\cite{sun2021neuralrecon}, Instant-NGP~\cite{muller2022instant}, VisFusion~\cite{gao2023visfusion}, and Gaussian Splatting~\cite{kerbl20233d} use neural feature fusion and differentiable rendering to incrementally update 3D models in near real time. These advances make it possible to render a 3D map during flight, providing immediate feedback to guide drone motion.
Building on these advances, recent studies have explored using real-time 3D reconstruction as feedback for motion planning. SplatNav~\cite{chen2024safer} and SAFER-Splat~\cite{chen2024splat} integrate reconstruction updates into navigation pipelines, enabling drones to react to evolving scene geometry and improve safety. NeurAR~\cite{ran2023neurar} leverages neural uncertainty to refine flight paths in under-sampled regions. Other frameworks, including RT-GuIDE~\cite{tao2024rt}, GS-Planner~\cite{jin2024gs}, and HGS-Planner~\cite{xu2024hgs}, incorporate reconstruction feedback to guide view selection and trajectory planning. While these studies highlight the potential of integrating online reconstruction with planning, applying such feedback-driven strategies to multi-robot coverage control remains largely unexplored.


In this article, we present Coverage-Recon, a multi-drone coordinated image sampling algorithm that integrates online map feedback to improve 3D reconstruction quality in real time (see Fig.~\ref{fig:coverage-recon}). The problem is formulated as an angle-aware coverage control task, where multiple drones collaboratively explore the scene while capturing images from diverse viewpoints. These images are processed on-the-fly using NeuralRecon~\cite{sun2021neuralrecon} to incrementally generate an evolving 3D mesh. Variations in the mesh are treated as indicators of reconstruction uncertainty and fed back to dynamically refine the importance index, guiding drones to focus on regions requiring further observations. A Quadratic Programming (QP)-based controller is then designed to guarantee sampling performance by constraining the decay rate of the objective function, while enforcing safety through Control Barrier Functions (CBFs)~\cite{ames2016control}. The Coverage-Recon framework is validated through Unity--ROS2 simulations and real-world experiments, demonstrating that incorporating online feedback yields more accurate and complete reconstructions, with higher F-Score performance compared to baseline methods.

In summary, the main contributions are:
\begin{enumerate}
    \item An extended angle-aware coverage control formulation~\cite{lu2024angle} with altitude control, expanding the control inputs to five dimensions and introducing a modified sensing performance function.
    \item A novel mesh-change quantification method for dynamically updating importance indices, with comparisons to M3C2-based approaches~\cite{lague2013accurate}.
    \item Integration of the NeuralRecon pipeline~\cite{sun2021neuralrecon} with coverage control, establishing a closed feedback loop between image sampling and reconstruction.
    \item Validation through simulation and real-world experiments, demonstrating significant improvements in reconstruction quality both qualitatively and quantitatively.
\end{enumerate}


Finally, an earlier version of this work appeared in conference proceedings~\cite{hanif2025impact}. 
Differently from the conference version, the current article presents multi-drone simulations, quantitative evaluation through a comparative study against benchmark baselines~\cite{torres2016coverage, dan2020control, shimizu2021angle, lu2024angle}, and experimental validation, along with a refined framework including methodological improvements.

\section{Problem Settings}
\label{sec:problem_setting}

This chapter formulates the task of coordinated multi-drone image sampling for 3D reconstruction as an \emph{angle-aware coverage control problem}.  
We begin by introducing the drones and the assumptions on the target region to be reconstructed, which is modeled as a virtual field along with its corresponding geometric relationships.  
We then describe how drone sensing performance is quantified when observing the scene, and finally, we present the importance index and global objective function that will be used in the subsequent control design.

\subsection{Drones, Virtual Field, and Geometry}
\label{subsec:2a}


\begin{figure}[t]
    \centering
    \includegraphics[width=0.9\columnwidth]{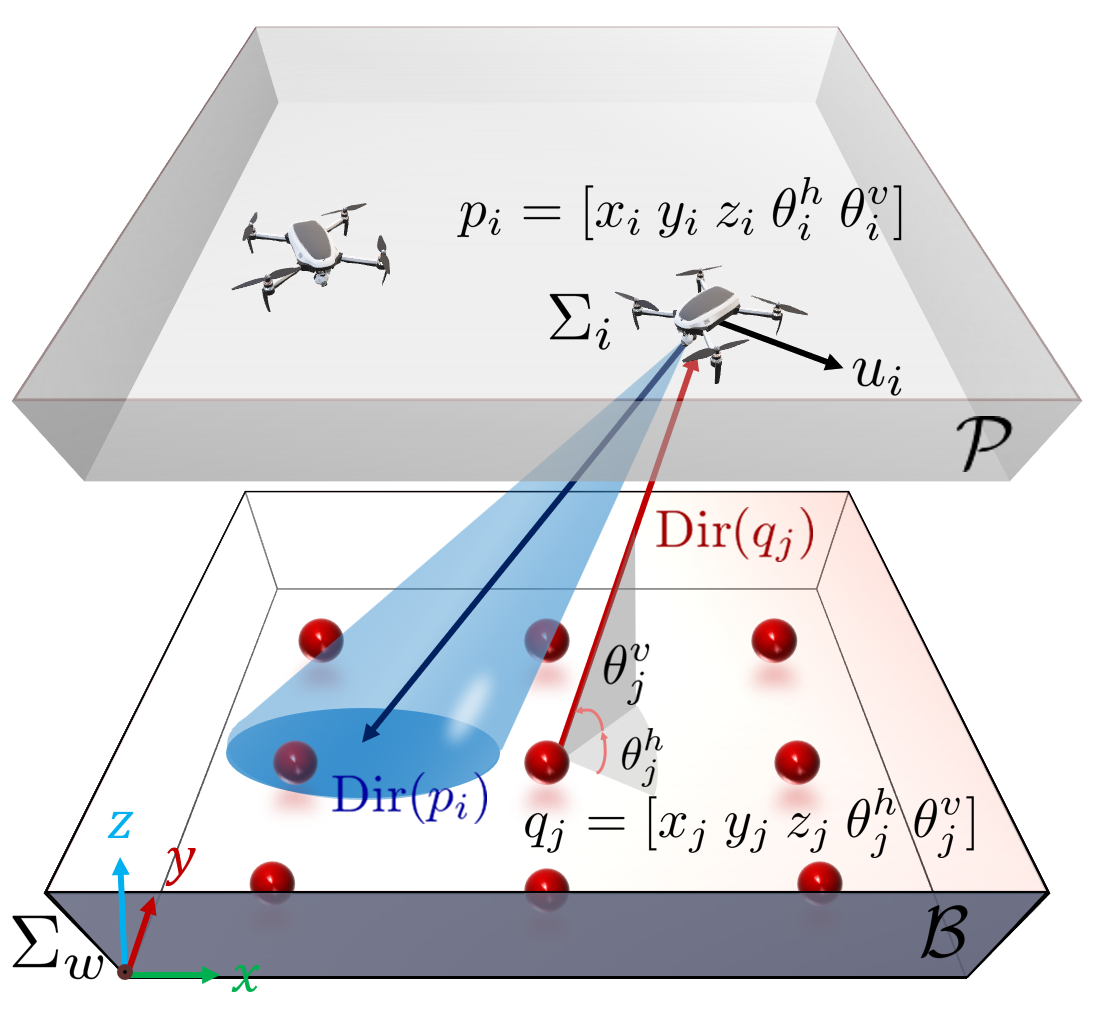}
    \caption{Illustration of the geometric relationship between drones and observation points. Drones in flight region $\mathcal{P}$ observe target region $\mathcal{B}$, adjusting camera yaw $\theta^h_i$ and pitch $\theta^v_i$ to capture diverse viewpoints for 3D reconstruction.}
    \label{fig:prob_illust}
\end{figure}

Consider a group of $n$ drones, indexed by $\mathcal{I} := \{1,2,\ldots,n\}$, that fly in a three-dimensional Euclidean space. Let $\Sigma_w$ denote the world frame.
Each drone is modeled as a rigid body with its own body frame $\Sigma_i$, attached to the $i$-th drone’s camera.
The body frame is defined such that its $z$-axis aligns with the optical axis of the onboard camera.
Each drone is also equipped with a gimbal that enables independent control of the camera’s yaw and pitch angles.

The state of drone $i$ is represented by
\[
p_i = [x_i, y_i, z_i, \theta^h_i, \theta^v_i]^\top \in \mathcal{X},
\]
where $\mathcal{X} = \mathbb{R}^3 \times \Theta^h \times \Theta^v$ is defined as the state space domain consisting of the three-dimensional position and the camera orientation angles.  
Here, $[x_i, y_i, z_i]^\top$ is the position of the drone in $\Sigma_w$, and $\theta^h_i \in \Theta^h = [-\pi,\pi]$ and $\theta^v_i \in \Theta^v = (0,\pi/2]$ denote the yaw and pitch angles of the camera, respectively.  
The drone's motion is governed by the velocity input $u_i \in \mathcal{U} \subseteq \mathbb{R}^5$, where $\mathcal{U}$ represents the set of admissible inputs, such that
\[
\dot{p}_i = u_i.
\]
The drones are assumed to operate within a bounded region $\mathcal{P} \subset \mathbb{R}^3$. 
Note that in the previous work~\cite{lu2024angle}, only planar motion of the drones with fixed altitude was considered, resulting in a four-dimensional control input. 
In this work, we extend the formulation by explicitly controlling the drone altitude as well, resulting in a five-dimensional control input.

\begin{figure}[t]
    \centering
    \includegraphics[width=0.5\columnwidth]{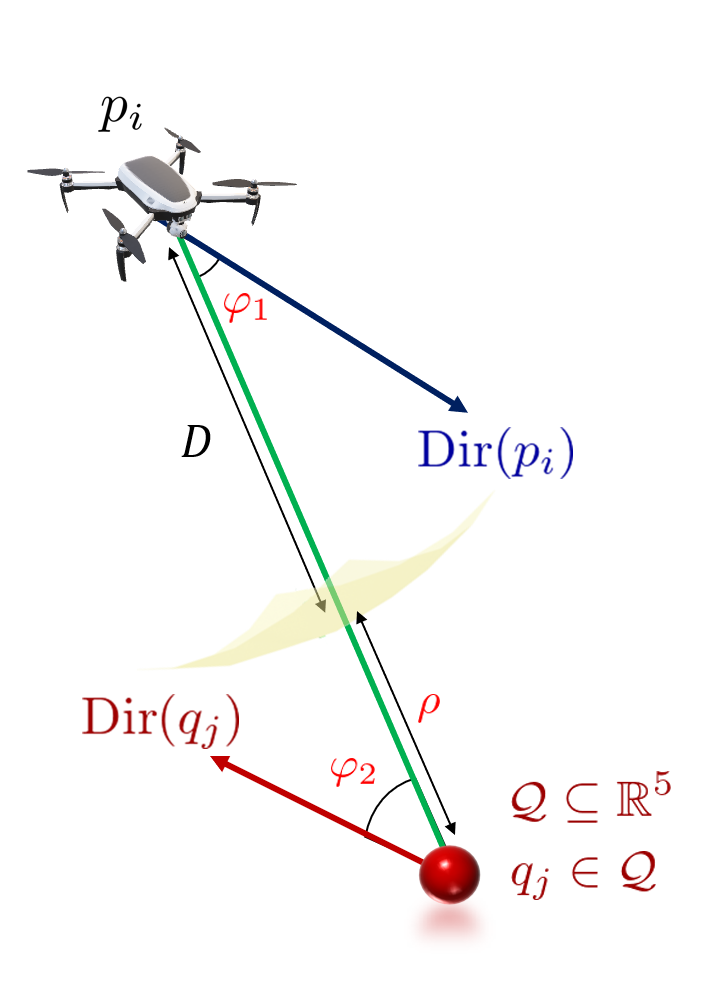}
    \caption{Visualization of the sensing performance metrics for a drone at state $p_i$ observing an observation point $q_j$, showing the distance $D$, deviation angles $\varphi_1$ and $\varphi_2$, and their associated direction vectors.}
    \label{fig:geo-ref}
\end{figure}

Next, we assume that the reconstruction target is contained within a region $\mathcal{B} \subset \mathbb{R}^3$, which defines the boundary enclosing the scene to be reconstructed (see Fig.~\ref{fig:prob_illust}).
For high-quality 3D reconstruction, each point in $\mathcal{B}$ should be observed from multiple viewpoints.  
To capture both spatial locations and viewing directions, we define a virtual field
\[
\mathcal{Q}_c = \mathcal{B} \times \Theta^h \times \Theta^v,
\]
forming a five-dimensional space, where each element represents a 3D point in $\mathcal{B}$ along with a pair of viewing angles.  
This space is discretized into $m$ cells, and the representative point of the $j$-th cell, referred to as the observation point, is given by
\[
q_j = [x_j, y_j, z_j, \theta^h_j, \theta^v_j]^\top \in \mathcal{Q}_c,
\]
with the collection of all such points denoted by $\mathcal{Q} = \{q_j \mid j \in \mathcal{M} = \{1,2,\ldots,m\}\}$. Fig.~\ref{fig:prob_illust} provides an illustration of the geometric relationship of the drone and observation points.

We introduce three operators that extract the position, angle, and direction components 
from the five-dimensional coordinates $p_i$ and $q_j$:  
\begin{align*}
\mathrm{Pos}([x,y,z,\theta^h,\theta^v]^\top) &= [x,y,z]^\top \in \mathbb{R}^3,\\
\mathrm{Ang}([x,y,z,\theta^h,\theta^v]^\top) &= [\theta^h,\theta^v]^\top \in \mathbb{R}^2,\\
\mathrm{Dir}([x,y,z,\theta^h,\theta^v]^\top) &=
\begin{bmatrix}
\cos\theta^h \cos\theta^v \\ 
\sin\theta^h \cos\theta^v \\ 
\sin\theta^v
\end{bmatrix} \in S^2,
\end{align*}
where $S^2 := \{\, v \in \mathbb{R}^3 \mid \|v\| = 1 \,\}$ denotes the two-dimensional unit sphere.  
For a drone state $p_i$, $\mathrm{Pos}(p_i)$ gives its spatial location, whereas for an observation point $q_j$, $\mathrm{Pos}(q_j)$ indicates the corresponding position in the virtual field. 
Similarly, $\mathrm{Ang}(p_i)$ represents the drone’s current camera orientation, while $\mathrm{Ang}(q_j)$ specifies the desired viewing direction. 
Finally, $\mathrm{Dir}(p_i)$ denotes the optical axis of the onboard camera in the world frame $\Sigma_w$, and $\mathrm{Dir}(q_j)$ gives the required viewing direction for proper observation.



\begin{remark}
In the present formulation, we consider only the control of the yaw and pitch angles of the drone camera. 
Although the roll angle can also be controlled, it is often omitted in practice since its effect on the sensing footprint is minimal \cite{liu2021aerial,smith2018aerial}. 
\end{remark}


\subsection{Sensing Performance}


To evaluate the quality of observation by the drones, we employ three metrics for a drone at state $p_i$ observing an observation point $q_j \in \mathcal{Q}$.  
These measures, illustrated in Fig.~\ref{fig:prob_illust}(b), capture the essential conditions for obtaining high-quality images required for 3D reconstruction:
(i) the alignment of the drone camera’s optical axis with the line-of-sight to the observation point,  
(ii) the orientation of the observation point relative to the drone, and  
(iii) the distance between the drone and the observation point.

The first measure evaluates the deviation between the drone camera’s optical axis and the line-of-sight vector from the drone to the observation point. This deviation is represented by the angle
\[
\varphi_1(p_i, q_j) = \arccos \left( 
\frac{\mathrm{Dir}(p_i) \cdot (\mathrm{Pos}(q_j) - \mathrm{Pos}(p_i))}
{\|\mathrm{Pos}(q_j) - \mathrm{Pos}(p_i)\|} \right),
\]
where a smaller $\varphi_1$ indicates that the camera is better aligned with the observation point.

The second measure evaluates whether the observation point is oriented toward the drone’s camera. This is quantified by the angle
\[
\varphi_2(p_i, q_j) = \arccos \left( 
-\frac{\mathrm{Dir}(q_j) \cdot (\mathrm{Pos}(q_j) - \mathrm{Pos}(p_i))}
{\|\mathrm{Pos}(q_j) - \mathrm{Pos}(p_i)\|} \right),
\]
where a smaller $\varphi_2$ indicates that the target surface is directly visible to the camera, allowing clear observation of its features.

The third measure accounts for the distance between the drone and the observation point.  
To ensure proper image resolution, the drone should maintain an optimal observation distance $D$, which depends on the intrinsic properties of the camera such as focal length and sensor resolution.  
The deviation from this ideal distance is defined as
\[
\rho(p_i, q_j) = \|\mathrm{Pos}(q_j) - \mathrm{Pos}(p_i)\| - D.
\]

These three measures collectively characterize the sensing geometry. When $\varphi_1$, $\varphi_2$, and $\rho$ are all close to zero, the sensing performance on the point $q_j$ is maximized.
To combine the three measures into a single performance metric, we require a function that is bounded between 0 and 1 to represent a normalized sensing performance level.
In addition, the function should have a smooth, non-zero gradient to ensure the existence of feasible control inputs that can improve the sensing performance. 
The Gaussian function naturally satisfies these properties and is therefore adopted in this work.
%
The resulting sensing performance function is defined as
\begin{equation}
    \begin{split}
        h_1(p_i, q_j) = &
        \exp\!\left(-\frac{(1-\cos \varphi_1)^2}{2\sigma_1^2}\right)
        \exp\!\left(-\frac{(1-\cos \varphi_2)^2}{2\sigma_2^2}\right) \\
        &\times \exp\!\left(-\frac{\rho^2}{2\sigma_3^2}\right),
    \end{split}
    \label{eq:sensing_performance}
\end{equation}
where $\sigma_1$, $\sigma_2$, and $\sigma_3$ are sensitivity parameters that determine the tolerance to deviations for each of the three sensing measures.  
These parameters should be tuned to reflect the characteristics of the drone’s camera and the desired sensing precision.


The parameter $\sigma_1$ is tuned according to the camera’s field of view so that the performance function reflects the angular characteristics of the camera. A smaller $\sigma_1$ enforces stricter alignment, while a larger $\sigma_1$ allows more tolerance.

The parameter $\sigma_2$ relates to the angular density of observation points and is set to ensure smooth overlap between neighboring points without gaps or redundancy. For evenly spaced observation points, $\sigma_2$ is chosen such that the performance level at the midpoint between two adjacent points is $0.5$, leading to
\[
\sigma_2 = \sqrt{\frac{(1 - \cos (\Delta \theta / 2))^2}{2 \log 2}},
\]
where $\Delta \theta$ denotes the angular spacing.

The parameter $\sigma_3$ governs acceptable deviation from the desired observation distance $D$. It is determined by camera properties: high-resolution cameras require smaller $\sigma_3$ for strict distance enforcement, while lower-resolution cameras allow larger $\sigma_3$ for more flexibility.

By carefully tuning these parameters, the function $h_1(p_i, q_j)$ provides a smooth quantitative measure of how effectively a drone at state $p_i$ observes an observation point $q_j$.  
This performance metric will serve as the foundation for updating the importance index, which will be explained below.

\subsection{Importance Index and Global Objective}
\label{sec:importance_index}

We next define an importance index $\phi_j(t) \in [0, \infty)$ for each observation point $q_j \in \mathcal{Q}$, which represents the priority of observing that point.  
At the beginning of the mission, all points are initialized with the same value, $\phi_j(0) = 1$, indicating that no region has been observed yet.  
As the drones capture images over time, $\phi_j$ decreases according to the quality of the observations provided by the drones.  
The mission is considered complete when all $\phi_j$ values approach zero, meaning that every point in the region has been sufficiently observed.

The evolution of $\phi_j$ is governed by the sensing performance function $h_1(p_i, q_j)$ introduced in \eqref{eq:sensing_performance}.  
For each point $q_j$, its update rule is given by
\begin{equation}
    \dot{\phi}_j = -\delta_1 \max_{i \in \mathcal{I}} h_1(p_i, q_j) \, \phi_j,
    \label{eq:phi_j_update_h1}
\end{equation}
where $\delta_1 > 0$ is a scaling parameter and $\mathcal{I}$ is the set of all drones.  
This rule ensures that the decrease rate of $\phi_j$ depends on both its current value and the maximum sensing performance among all drones observing that point.  
If a point is being observed with high quality by at least one drone, $\phi_j$ decreases rapidly, while poorly observed points retain higher values, encouraging the drones to focus on them.

In this context, the control goal is to minimize the overall importance index across the entire virtual field by optimizing the drones' trajectories.  
The global objective function $J$, representing the average importance level over all observation points, is defined as
\begin{equation}
    J = \sum_{j \in \mathcal{M}} \frac{\phi_j}{|\mathcal{M}|} = \sum_{j=1}^{m} \frac{\phi_j}{m},
    \label{eqn:j}
\end{equation}
where $\mathcal{M}$ is the set of all discretized observation points, and $m = |\mathcal{M}|$ is its cardinality.  
Minimizing $J$ drives the drones to collectively focus on areas with high importance, ensuring that the entire region is sufficiently covered.  

Note that the parameter $\delta_1$ specifies how densely images associated with $q_j$ should be sampled to reconstruct the structure around $q_j$.
However, the required image density depends on the structural complexity around $\mathrm{Pos}(q_j)$, which is unknown \emph{a priori}.
Thus, systematically tuning $\delta_1$ is inherently challenging, and a similar limitation is observed in path planning approaches~\cite{tmuvsic2020current, torres2016coverage, bircher2016receding, delmerico2018comparison, song2017online, roberts2017submodular}.
To address this issue, in the next section we will extend this formulation by incorporating online map feedback, allowing the importance index to adapt dynamically to previously unknown scene structures. 

\section{Online Map Feedback for Coordinated Image Sampling}
\label{sec:map_feedback}

We adopt NeuralRecon~\cite{sun2021neuralrecon} as the online 3D reconstruction module.  
It takes as input a sequence of monocular RGB images and their corresponding camera poses, and outputs a triangular mesh representation of the scene, as illustrated in Fig.~\ref{fig:neural_recon_pipeline}.  
The mesh is updated at discrete events $t \in \mathcal{T} = \{1,2,\ldots\}$ and is denoted by $(V^t, F^t)$, where $V^t$ is the set of vertices and $F^t$ is the set of triangular faces.  
This mesh output will be used as the basis for the online map feedback mechanism in this work, which will be described in the following subsection.  
For further details on NeuralRecon, please refer to~\cite{sun2021neuralrecon}.




\begin{figure}[t]
    \centering
    \includegraphics[width=0.98\columnwidth]{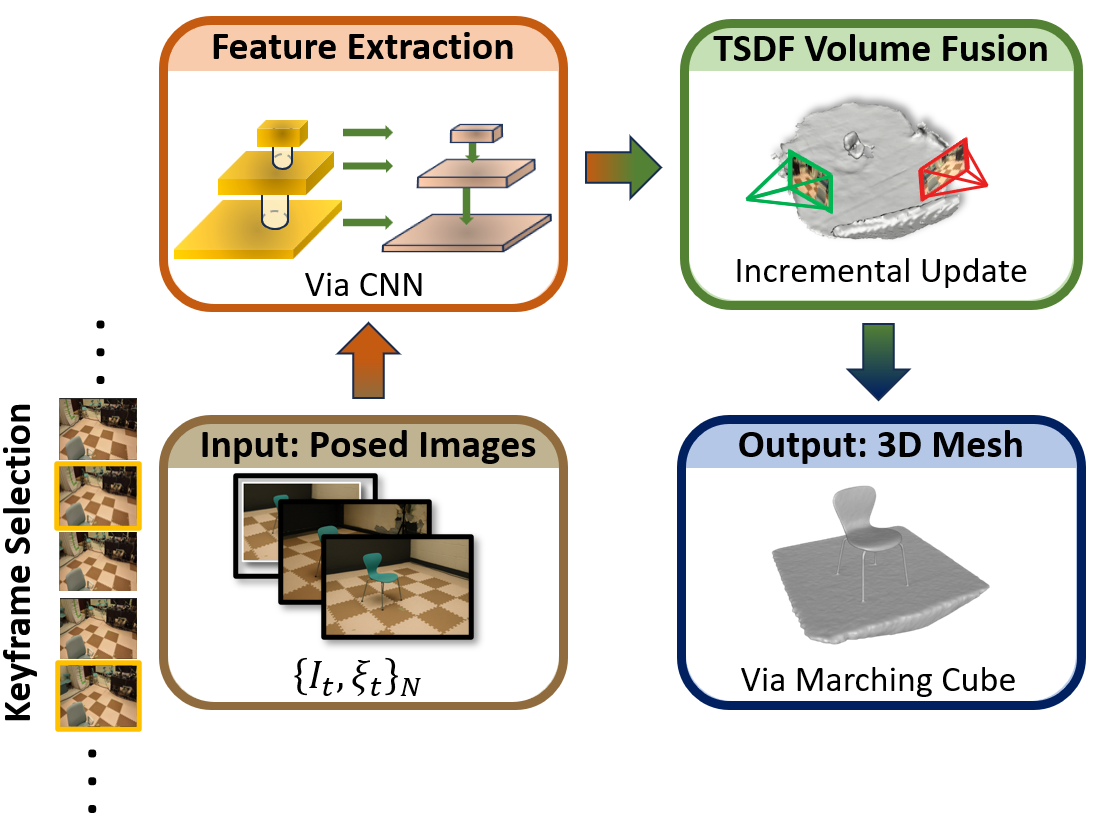}
    \caption{The NeuralRecon pipeline~\cite{sun2021neuralrecon} takes posed monocular RGB keyframes ${(I_t,\xi_t)}$ as input and outputs a triangular mesh $(V^t,F^t)$, updated at discrete events $t \in \mathcal{T}$.}
    \label{fig:neural_recon_pipeline}
\end{figure}

\subsection{Sampling Feedback from Mesh Evolution}
\label{sec:mesh_feedback}

We are interested in exploiting feedback from online reconstruction to guide image sampling during flight, with the aim of improving the quality of the reconstructed map. If ground truth were available, one could simply identify poorly reconstructed regions and prioritize them for additional sampling by the drones. In realistic scenarios, however, no prior knowledge of the scene is available, which raises the challenge of assessing reconstruction quality using only the evolving map itself~\cite{zhang2021metric,coupry2025assessing}. 

To address this challenge, we propose to use mesh changes between successive reconstructions as a feedback signal for guiding the sampling process. Intuitively, if a certain region of the scene exhibits large mesh changes after being sampled by the drones, it suggests that the region is still evolving and therefore may require further observations. Conversely, if the mesh in a region stabilizes with only small changes, it is likely that the region has been sufficiently reconstructed and no longer needs additional sampling. Building on this idea, the degree of mesh change is used as a feedback to update the importance indices $\phi_j$ in the previous angle-aware coverage formulation. 
If a region around point $q_j$ continues to exhibit large mesh changes even after being sampled, $\phi_j$ is increased, prompting the drones to revisit and collect additional observations. This process continues until the mesh changes diminish, indicating that the reconstruction has stabilized and further sampling is no longer necessary.

To formalize this concept, we introduce a nonnegative feedback function $h_2$ that quantifies the degree of mesh change as a function of both location $q_j$ and reconstruction update event $t$,  
\[
h_2(q_j,t):\ \mathcal{Q} \times \mathcal{T} \to \mathbb{R}_{\ge 0}.
\]  
Here, large values of $h_2(q_j,t)$ correspond to regions with significant mesh changes. The detailed formulation of $h_2$ will be discussed in the next subsection, as it depends on the specific method used to quantify the mesh changes.  

We use this feedback function to update the importance index $\phi_j$ at each reconstruction update event. As these events occur at discrete time instances, the update is modeled as a jump in $\phi_j$ and expressed as
\begin{equation}
    \phi_j(t^+) = \phi_j(t^-) + \delta_2\, h_2(q_j,t),
    \label{eq:phi_j_update_h2}
\end{equation}
where $\delta_2 > 0$ is a gain parameter that determines the strength of the feedback effect. Here, $t^-$ and $t^+$ denote the values of $\phi_j$ immediately before and after the update event, respectively.

\subsection{Mesh-Change Quantification}
\label{sec:mesh_change_quantification}

We now elaborate the feedback function $h_2(q_j,t)$ by defining how to quantify changes in the reconstructed mesh between consecutive updates. At each update event $t \in \mathcal{T}$, the mesh is represented as a set of vertices
\[
V^t = \{v_\ell^t \in \mathbb{R}^3 \mid \ell = 1, \ldots, N^t\} \subseteq \mathbb{R}^3,
\]
where $v_\ell^t$ is the position of the $\ell$-th vertex and $N^t$ is the total number of vertices at time $t$. As new images are integrated, both the positions and number of vertices evolve, reflecting local changes in the reconstruction.

Our objective is to produce a nonnegative function over virtual points $q_j \in \mathcal{Q}$ such that regions experiencing larger local mesh changes yield larger values of $h_2(q_j,t)$. To achieve this, we propose two complementary approaches for quantifying mesh change:  
(1) a \textbf{3D Grid-Based Method}, which can be computationally efficient depending on the defined sparsity of the grid, and  
(2) an \textbf{M3C2-Based Method}, which directly captures geometric displacement between consecutive meshes.

\vspace{0.5\baselineskip}

\subsubsection{3D Grid-Based Method}
\label{sec:grid_metric}

The first method quantifies mesh changes by discretizing the reconstruction space into a set of uniformly spaced evaluation points:
\[
\mathcal{R} := \{r_k \in \mathbb{R}^3 \mid k = 1, \ldots, s\},
\]
where each $r_k$ represents a fixed position in 3D space. By selecting the number of grid points $s$ smaller than the number of virtual points $|\mathcal{Q}|$, the computational load can be reduced while still capturing the essential mesh changes.

For each evaluation point $r_k$, the set of mesh vertices that are closest to $r_k$ at update time $t$ is defined as
\[
V_k^t := \Bigl\{\ell \in \{1,\ldots,N^t\} \ \Big|\ r_k = \arg\min_{r \in \mathcal{R}} \|r - v_\ell^t\|, \ v_\ell^t \in \mathcal{B} \Bigr\},
\]
where $\mathcal{B}$ denotes the bounded reconstruction region defined earlier in Section~\ref{subsec:2a} (see Fig.~\ref{fig:prob_illust}a).  
The local mesh change at $r_k$ between two consecutive updates $t-1$ and $t$ is then given by the difference in the number of associated vertices:
\begin{equation}
    \label{eq:delta_mesh}
    \Delta V^t(r_k) := \big|V_k^t\big| - \big|V_k^{t-1}\big|.
\end{equation}

To integrate these changes into the feedback function, we define $h_2(q_j,t)$ as
\begin{equation}
    \label{eq:h2_grid}
    \begin{split}
        h_2(q_j,t) = \sum_{r_k \in \mathcal{R}} 
        &\tanh^2\!\Biggl(\Bigl(\frac{\Delta V^t(r_k)}{\kappa}\Bigr)^2\Biggr) \\
        &\times \exp\!\Biggl(-\frac{\|\mathrm{Pos}(q_j)-r_k\|^2}{2\sigma_4^2}\Biggr),
    \end{split}
\end{equation}
where $\kappa > 0$ normalizes the sensitivity to mesh changes, and $\sigma_4 > 0$ determines the spatial influence range of each grid point. The $\tanh^2(\cdot)$ term normalizes the raw mesh change magnitude to the range $[0,1)$, effectively suppressing the effect of small fluctuations while emphasizing regions with significant local changes. From an implementation perspective, the Gaussian weights $\exp(-\|\mathrm{Pos}(q_j)-r_k\|^2/2\sigma_4^2)$ depend only on the positions of $q_j$ and $r_k$ and can thus be precomputed to reduce runtime cost. 

\vspace{0.5\baselineskip}

\subsubsection{M3C2-Based Method}
\label{sec:m3c2_metric}

While the grid-based method provides a lightweight way to approximate mesh changes, it only captures changes in local vertex density and does not directly measure geometric displacement between two consecutive meshes. To address this limitation, and for comparison, we employ the Multiscale Model-to-Model Cloud Comparison (M3C2) distance~\cite{lague2013accurate}, which directly computes local surface-to-surface displacements by comparing two point clouds.

In this approach, the mesh vertices $V^{t-1}$ and $V^t$ are treated as two consecutive point clouds. A set of uniformly spaced core points $\mathcal{R} = \{r_k \in \mathbb{R}^3 \mid k = 1, \ldots, s \}$ is defined, and the M3C2 distance is evaluated at these core points. For each $r_k$, the local displacement between the two meshes is computed and denoted as
\[
L_{\mathrm{M3C2}}(r_k,t) \geq 0,
\]
which represents the average perpendicular distance between the two point clouds in the local neighborhood of $r_k$.  

The resulting feedback function is then constructed as
\begin{equation}
\label{eq:h2_m3c2}
    h_2(q_j,t) = \sum_{r_k \in \mathcal{R}} L_{\mathrm{M3C2}}(r_k,t)
    \exp\!\Biggl(-\frac{\|\mathrm{Pos}(q_j)-r_k\|^2}{2\sigma_4^2}\Biggr),
\end{equation}
where the exponential term acts as a Gaussian kernel to localize the influence of each core point.  
Unlike the grid-based method in \eqref{eq:h2_grid}, no additional normalization such as $\tanh^2(\cdot)$ is required here because $L_{\mathrm{M3C2}}$ directly reflects the magnitude of geometric displacement.  

The M3C2-based method provides a more accurate measure of mesh evolution by capturing true geometric changes. However, this comes with a significantly higher computational cost, as the distances between two full point clouds must be recomputed at every update. In this work, we mainly use the M3C2-based method as a reference to compare and validate the performance of the proposed 3D grid-based method.

\section{Controller Design with Online Map Feedback}
\label{sec:controller}

In this section, we design a controller for coordinated image sampling with multiple drones using a QP-based approach~\cite{ames2016control}. 
The controller builds on the angle-aware coverage formulation introduced in Section~\ref{sec:problem_setting} and incorporates the online map feedback mechanism described in Section~\ref{sec:map_feedback}. 
We begin by describing the control objective and the necessary constraints, followed by the complete QP formulation and the map feedback activation mechanism.

\subsection{Control Objective and Constraints}
\label{sec:objective_constraints}

The control goal is to drive the drones toward complete coverage of the scene by collecting images across all regions, which corresponds to reducing $J$, defined in \eqref{eqn:j}, as close to zero as possible. However, simply reducing $J$ is not sufficient to ensure efficient progress, since even a random-walk behavior could eventually decrease $J$ over time without systematic coordination.  
To achieve reliable and structured progress, we introduce a constraint on the rate of decrease of $J$, similarly to~\cite{lu2024angle, shimizu2021angle}.

\textbf{Sampling performance constraint:}  
To guarantee a desired level of sampling performance, we require the objective $J$ to decrease at a prescribed rate:
\begin{equation}
    \dot{J} \leq -\gamma, \qquad \forall \, \tau \notin \mathcal{T}_c,
    \label{eq:j_dot_constraint}
\end{equation}
where $\gamma > 0$ is a design parameter, and $\mathcal{T}_c$ is the set of continuous time intervals between discrete reconstruction updates $\mathcal{T}$.  
This ensures that $J$ decreases according to the desired performance level between two successive map-feedback updates.

Let us now evaluate the time derivative of $J$ for any $\tau \notin \mathcal{T}_c$:
\begin{align}
    \dot J
    &= \frac{1}{|\mathcal{M}|} \sum_{j\in\mathcal{M}} \dot{\phi}_j
     = -\frac{1}{|\mathcal{M}|} \sum_{j\in\mathcal{M}} \delta_1 \max_{i\in\mathcal{I}} h_1(p_i,q_j)\,\phi_j \nonumber\\
    &= -\frac{1}{|\mathcal{M}|} \sum_{i=1}^{n} \sum_{j\in\mathcal{V}_i(p)} \delta_1 h_1(p_i,q_j)\,\phi_j
     = -\frac{1}{|\mathcal{M}|} \sum_{i=1}^{n} I_i, \label{eq:Jdot}
\end{align}
where  
\[
\mathcal{V}_i(p) := \big\{ j \in \mathcal{M} \mid h_1(p_i,q_j) \leq h_1(p_k,q_j),\ \forall k \in \mathcal{I} \big\}
\]
is a Voronoi-like partition induced by the sensing function $h_1$, and
\begin{equation}
    I_i := \sum_{j\in\mathcal{V}_i(p)} \delta_1 h_1(p_i,q_j)\,\phi_j
    \label{eq:I_i}
\end{equation}
represents the contribution of drone $i$ in decreasing $J$.

\begin{theorem}
    Suppose that each drone $i$ satisfies the local condition
    \begin{equation}
        \dot{J}_i := -\frac{I_i}{|\mathcal{V}_i(p)|} \leq -\gamma,
        \label{eq:j_i_dot}
    \end{equation}
    for all $i \in \mathcal{I}$. Then, the global constraint \eqref{eq:j_dot_constraint} is also satisfied.
\end{theorem}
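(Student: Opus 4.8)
The plan is to combine the decomposition of $\dot J$ already obtained in \eqref{eq:Jdot} with the fact that the sensing-induced cells $\{\mathcal V_i(p)\}_{i\in\mathcal I}$ partition the index set $\mathcal M$. From \eqref{eq:Jdot} we have $\dot J = -\frac{1}{|\mathcal M|}\sum_{i=1}^{n} I_i$ for every $\tau\notin\mathcal T_c$, so it suffices to show $\sum_{i=1}^{n} I_i \ge \gamma\,|\mathcal M|$.

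First I would rewrite the hypothesis \eqref{eq:j_i_dot} in cleared form: for each $i\in\mathcal I$ with $\mathcal V_i(p)\neq\emptyset$, the condition $-I_i/|\mathcal V_i(p)| \le -\gamma$ is equivalent to $I_i \ge \gamma\,|\mathcal V_i(p)|$, the only mild care being the tracking of the two negations and the positivity of $|\mathcal V_i(p)|$. Summing over all drones gives $\sum_{i=1}^{n} I_i \ge \gamma\sum_{i=1}^{n} |\mathcal V_i(p)|$. The key structural step is then to observe that, by construction of $\mathcal V_i(p)$ (each observation point $q_j$ is assigned to a drone that realizes $\max_{k\in\mathcal I} h_1(p_k,q_j)$, with ties broken so that the cells are disjoint, which is precisely the convention under which the last equality in \eqref{eq:Jdot} holds), the family $\{\mathcal V_i(p)\}_{i\in\mathcal I}$ is a partition of $\mathcal M$; hence $\sum_{i=1}^{n} |\mathcal V_i(p)| = |\mathcal M|$. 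Substituting back yields $\dot J = -\frac{1}{|\mathcal M|}\sum_{i=1}^{n} I_i \le -\frac{\gamma}{|\mathcal M|}\,|\mathcal M| = -\gamma$, which is exactly \eqref{eq:j_dot_constraint}.

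Finally I would dispose of the degenerate case of an empty cell: if $\mathcal V_i(p)=\emptyset$ then $I_i=0$ and that drone contributes nothing to either side of the summed inequality, while the partition identity $\sum_{k} |\mathcal V_k(p)|=|\mathcal M|$ is unaffected, so the conclusion still holds (the local constraint \eqref{eq:j_i_dot} is simply read as vacuous for such a drone). No part of this argument is analytically hard; the only thing requiring attention is the bookkeeping of the Voronoi-like partition, namely its disjointness and exhaustiveness, together with the sign manipulation when clearing \eqref{eq:j_i_dot}. The bound is established pointwise in time on every interval between successive reconstruction updates, which is all that \eqref{eq:j_dot_constraint} demands.
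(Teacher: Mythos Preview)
Your proof is correct and follows essentially the same route as the paper's: clear the local inequality to $I_i \ge \gamma\,|\mathcal V_i(p)|$, sum over $i$, use $\sum_i |\mathcal V_i(p)| = |\mathcal M|$, and substitute into \eqref{eq:Jdot}. You are simply more explicit than the paper about why the cells partition $\mathcal M$ and about the degenerate empty-cell case, but the underlying argument is identical.
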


\begin{proof}
    Multiplying \eqref{eq:j_i_dot} by $|\mathcal{V}_i(p)|$ gives $-I_i \leq -|\mathcal{V}_i(p)|\gamma$.  
    Summing over all drones,
    \[
    -\sum_{i=1}^{n} I_i \leq -\sum_{i=1}^{n} |\mathcal{V}_i(p)| \gamma = -|\mathcal{M}| \gamma.
    \]
    Substituting into \eqref{eq:Jdot} yields $\dot{J} \leq -\gamma$, completing the proof.
\end{proof}

To enforce \eqref{eq:j_i_dot}, we define the function
\begin{equation}
    b_{i,I} = I_i - |\mathcal{V}_i(p)| \gamma,
    \label{eq:b_i,I}
\end{equation}
where $b_{i,I} \geq 0$ is equivalent to satisfying the local sampling performance constraint.  
This condition is satisfied when the velocity input $u_i$ is chosen so that the following affine inequality holds:
\begin{equation}
    \frac{\partial b_{i,I}}{\partial p_i}^\top u_i + \frac{\partial b_{i,I}}{\partial \phi_j} \dot{\phi}_j +
    \alpha_1(b_{i,I}) \geq 0,
    \label{eq:cbf_sampling}
\end{equation}
where $\alpha_1(\cdot)$ is a locally Lipschitz extended class-$\mathcal{K}$ function~\cite{ames2016control}.  

\textbf{Gimbal pitch limitation constraint:}  
Following \cite{lu2024angle}, we impose a constraint to ensure the gimbal pitch angle remains within its hardware limits, bounded by $\theta^v_{\min}$ and $\theta^v_{\max}$.  
The corresponding control barrier function is defined as
\begin{equation}
    b_{i,\theta^v} = (\theta^v_{\max} - \theta^v_{\min})^2 -
    \left(\theta^v_i - \frac{\theta^v_{\min} + \theta^v_{\max}}{2}\right)^2 \geq 0,
    \label{eq:cbf_gimbal_pitch}
\end{equation}
where $b_{i,\theta^v} \geq 0$ enforces the allowable range of motion.

This condition is satisfied when the control input $u_i$ is chosen to meet the following affine inequality:
\begin{equation}
    \frac{\partial b_{i,\theta^v}}{\partial p_i}^\top u_i + \alpha_2(b_{i,\theta^v}) \geq 0,
    \label{eq:cbf_gimbal_pitch_u}
\end{equation}
where $\alpha_2(\cdot)$ is a locally Lipschitz extended class-$\mathcal{K}$ function.  

\textbf{Collision avoidance constraint:}  
To ensure safe operation of the multi-drone system, we also introduce a constraint that enforces a minimum separation distance $d > 0$ between drones \cite{shimizu2021angle}.  
The corresponding control barrier function is given by
\begin{equation}
    b_{i,\mathrm{ca}} = \min_{j \in \mathcal{I} \setminus \{i\}}
    \big\| \mathrm{Pos}(p_i) - \mathrm{Pos}(p_j) \big\|^2 - d^2 \geq 0,
    \label{eq:b_i,ca}
\end{equation}
where $b_{i,\mathrm{ca}} \geq 0$ guarantees that the distance between any two drones stays above the safety threshold $d$.

This condition is satisfied when the control input $u_i$ is chosen to meet the following affine inequality:
\begin{equation}
    \frac{\partial b_{i,\mathrm{ca}}}{\partial p_i}^\top u_i + \alpha_3(b_{i,\mathrm{ca}}) \geq 0,
    \label{eq:cbf_collision_u}
\end{equation}
where $\alpha_3(\cdot)$ is a locally Lipschitz extended class-$\mathcal{K}$ function.




\begin{remark}
    In practice, additional constraints may be required depending on the specific application, such as flight region boundaries or other operational limits.  
    These can be incorporated into the framework using the same control barrier function approach described above.
\end{remark}

\subsection{QP-based Controller}
\label{sec:qp_controller_sec}

We are now ready to present the QP-based controller that integrates all the CBF constraints introduced in the previous subsection, namely the sampling performance constraint \eqref{eq:cbf_sampling}, the gimbal pitch limitation constraint \eqref{eq:cbf_gimbal_pitch_u}, and the collision avoidance constraint \eqref{eq:cbf_collision_u}.
These constraints are combined into a single quadratic program that computes the velocity input $u_i$ for each drone by solving
\begin{subequations}
\label{eq:qp_controller}
    \begin{align}
        (u_i^{*},\,w_i^{*}) \;&=\;
        \arg\min_{(u_i,w_i)\in \mathcal{U}\times\mathbb{R}} \ \epsilon \,\|u_i\|^2 + |w_i|^2 \\
        \text{s.t.}\quad
        & \left(\frac{\partial b_{i,I}}{\partial p_i}\right)^{\!\top} u_i
          + \left(\frac{\partial b_{i,I}}{\partial \phi_j}\right)\dot\phi_j
          +  \alpha_1(b_{i,I}) \ \ge\ w_i, \label{eq:qp_controller_progress}\\
        & \left(\frac{\partial b_{i,\theta^v}}{\partial p_i}\right)^{\!\top} u_i
          + \alpha_2(b_{i,\theta^v}) \ \ge\ 0, \label{eq:qp_controller_pitch}\\
        & \left(\frac{\partial b_{i,\mathrm{ca}}}{\partial p_i}\right)^{\!\top} u_i
          + \alpha_3(b_{i,\mathrm{ca}}) \ \ge\ 0, \label{eq:qp_controller_collision}
    \end{align}
\end{subequations}
where $\epsilon>0$ weights the control effort, and $w_i$ is a slack variable that allows relaxation of the sampling performance constraint \eqref{eq:qp_controller_progress}, while keeping the safety constraints \eqref{eq:qp_controller_pitch} and \eqref{eq:qp_controller_collision} strict.  

\begin{theorem}\label{th:qp_form}
    Suppose that no $q_j$ (for $j \in \mathcal{M}$) is located on the boundary of $\mathcal{V}_i(p)$. 
    Let $\alpha_1(b_{i,I})=a_1 b_{i,I}$, $\alpha_2(b_{i,\theta^v})=a_2 b_{i,\theta^v}$, and $\alpha_3(b_{i,\mathrm{ca}})=a_3 b_{i,\mathrm{ca}}$, where $a_1, a_2, a_3 \in \mathbb{R}_{>0}$.
    Then the problem in \eqref{eq:qp_controller} is equivalently reformulated as
    \begin{subequations}\label{eq:qp_controller_final}
        \begin{align}
            (u_i^{*},w_i^{*}) 
            &= \argmin_{(u_i,w_i)\in\mathcal{U}\times\mathbb{R}} \ \epsilon\,\|u_i\|^2 + |w_i|^2 \\[4pt]
            \text{s.t.}\quad
            & \xi_{1i}^{\top} u_i + \xi_{2i} \ \ge\ w_i, \\
            & \chi_{1i}^{\top} u_i + \chi_{2i} \ \ge\ 0, \\
            & \rho_{1i}^{\top} u_i + \rho_{2i} \ \ge\ 0,
        \end{align}
    \end{subequations}
    where
    \begin{subequations}\label{eq:xi-defs}
        \begin{align}
            \xi_{1i} &= \delta_1 \sum_{j \in \mathcal{V}_i(p)} 
            \frac{\partial h_1(p_i,q_j)}{\partial p_i}\, \phi_j, \\[6pt]
            \xi_{2i} &= -a_1\,|\mathcal{V}_i(p)|\,\gamma \notag\\
                     &\quad + \sum_{j \in \mathcal{V}_i(p)} 
                     \delta_1 h_1(p_i,q_j)\,\phi_j
                     \big(a_1 - \delta_1 h_1(p_i,q_j)\big),
        \end{align}
    \end{subequations}
    \begin{subequations}\label{eq:chi-defs}
        \begin{align}
            \chi_{1i} &= 
            \big[\,0\ \ 0\ \ 0\ \ 0\ \ 2(\theta^v_i - \tfrac{\theta^v_{\min}+\theta^v_{\max}}{2})\,\big]^{\!\top}, \\[6pt]
            \chi_{2i} &= 
            a_2\!\left((\theta^v_{\max}-\theta^v_{\min})^2 
            - \left(\theta^v_i - \tfrac{\theta^v_{\min}+\theta^v_{\max}}{2}\right)^2\right),
        \end{align}
    \end{subequations}
    \begin{subequations}\label{eq:rho-defs}
        \begin{align}
            \rho_{1i} &= 
            \big[\,2\big(\mathrm{Pos}(p_i)-\mathrm{Pos}(p_{j^\star})\big)^{\!\top}\ \ 0\ \ 0\,\big]^{\!\top}, \\[6pt]
            \rho_{2i} &= 
            a_3\!\left(\big\|\mathrm{Pos}(p_i)-\mathrm{Pos}(p_{j^\star})\big\|^2 - d^2\right),
        \end{align}
    \end{subequations}
    
    \noindent
    where $j^\star = \arg\min_{j \in \mathcal{I} \setminus \{i\}} 
    \|\mathrm{Pos}(p_i)-\mathrm{Pos}(p_j)\|^2$ denotes the closest neighbor of drone $i$.  
\end{theorem}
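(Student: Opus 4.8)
The plan is to verify that each of the three abstract affine inequalities appearing in \eqref{eq:qp_controller} reduces, under the stated hypotheses and the linear choices of the class-$\mathcal{K}$ functions, to the concrete inequality with the coefficients $\xi_{1i},\xi_{2i}$, $\chi_{1i},\chi_{2i}$, and $\rho_{1i},\rho_{2i}$. Since the objective $\epsilon\|u_i\|^2+|w_i|^2$ is untouched, the equivalence of the two optimization problems follows immediately once the three constraint rewritings are established. I would treat the three constraints in increasing order of difficulty: the gimbal pitch constraint \eqref{eq:qp_controller_pitch} first, then the collision avoidance constraint \eqref{eq:qp_controller_collision}, and finally the sampling performance constraint \eqref{eq:qp_controller_progress}.

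For the pitch constraint, I would simply differentiate $b_{i,\theta^v}$ in \eqref{eq:cbf_gimbal_pitch} with respect to $p_i$. Only the $\theta^v_i$ component survives, giving $\partial b_{i,\theta^v}/\partial p_i = [\,0,0,0,0,\,-2(\theta^v_i - \tfrac{\theta^v_{\min}+\theta^v_{\max}}{2})\,]^\top$; a sign check against the definition of $\chi_{1i}$ is needed here, since the gradient carries a minus sign while $\chi_{1i}$ as written does not, so I would confirm the intended convention (the paper appears to absorb the sign into the last entry, and I would state this explicitly). Substituting $\alpha_2(b_{i,\theta^v})=a_2 b_{i,\theta^v}$ and reading off \eqref{eq:cbf_gimbal_pitch} directly yields $\chi_{2i}$. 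For the collision constraint, the hypothesis-free observation is that $b_{i,\mathrm{ca}}$ in \eqref{eq:b_i,ca} is a pointwise minimum of smooth functions; I would invoke the standard fact that away from ties (i.e.\ when the minimizing neighbor $j^\star$ is unique) this minimum is differentiable with gradient equal to that of the active branch, $\|\mathrm{Pos}(p_i)-\mathrm{Pos}(p_{j^\star})\|^2 - d^2$, whose $p_i$-gradient is $[\,2(\mathrm{Pos}(p_i)-\mathrm{Pos}(p_{j^\star}))^\top,\,0,\,0\,]^\top$. Plugging in $\alpha_3(b_{i,\mathrm{ca}})=a_3 b_{i,\mathrm{ca}}$ gives $\rho_{1i}$ and $\rho_{2i}$.

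The substantive step is the sampling performance constraint. Here I would first use the hypothesis that no $q_j$ lies on the boundary of the Voronoi-like cell $\mathcal{V}_i(p)$: this guarantees that $\mathcal{V}_i(p)$ is locally constant as $p_i$ varies, so that $I_i = \sum_{j\in\mathcal{V}_i(p)} \delta_1 h_1(p_i,q_j)\phi_j$ can be differentiated term by term with the index set held fixed, and likewise $|\mathcal{V}_i(p)|$ is locally constant. Differentiating $b_{i,I} = I_i - |\mathcal{V}_i(p)|\gamma$ with respect to $p_i$ then gives exactly $\xi_{1i}=\delta_1\sum_{j\in\mathcal{V}_i(p)}\tfrac{\partial h_1(p_i,q_j)}{\partial p_i}\phi_j$, while $\partial b_{i,I}/\partial p_i$ has no $\phi_j$-free constant part. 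Next I would compute the $\phi_j$-coupling term: since $b_{i,I}$ depends on each $\phi_j$ (for $j\in\mathcal{V}_i(p)$) linearly through $I_i$, we have $\partial b_{i,I}/\partial \phi_j = \delta_1 h_1(p_i,q_j)$, and substituting the update law $\dot\phi_j = -\delta_1 \max_{k} h_1(p_k,q_j)\phi_j = -\delta_1 h_1(p_i,q_j)\phi_j$ — the last equality because $j\in\mathcal{V}_i(p)$ means drone $i$ realizes the maximum — produces the term $-\sum_{j\in\mathcal{V}_i(p)} \delta_1^2 h_1(p_i,q_j)^2 \phi_j$. Finally, $\alpha_1(b_{i,I}) = a_1 b_{i,I} = a_1 I_i - a_1|\mathcal{V}_i(p)|\gamma = a_1\sum_{j\in\mathcal{V}_i(p)}\delta_1 h_1(p_i,q_j)\phi_j - a_1|\mathcal{V}_i(p)|\gamma$. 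Collecting the two constant-in-$u_i$ contributions gives $\xi_{2i} = -a_1|\mathcal{V}_i(p)|\gamma + \sum_{j\in\mathcal{V}_i(p)}\delta_1 h_1(p_i,q_j)\phi_j\bigl(a_1 - \delta_1 h_1(p_i,q_j)\bigr)$, matching \eqref{eq:xi-defs}.

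The main obstacle is the differentiability of the Voronoi-like partition underpinning the sampling-performance rewriting: the map $p\mapsto \mathcal{V}_i(p)$ is only piecewise constant, and the interchange of differentiation with the sum over $\mathcal{V}_i(p)$ is precisely what the ``no $q_j$ on the boundary'' hypothesis is designed to license. I would make this rigorous by noting that the boundary condition is an open condition on $p$, so on a neighborhood of the current configuration the set $\mathcal{V}_i(p)$ is fixed and $b_{i,I}$ is a genuinely smooth function of $(p_i,\{\phi_j\})$ there; the gradient and chain-rule computations above are then standard. A secondary, more routine obstacle is bookkeeping the sign conventions in $\chi_{1i}$ and $\rho_{1i}$ consistently between the gradients and the stated coefficient vectors, and confirming that the nonsmoothness of $b_{i,\mathrm{ca}}$ at ties is excluded exactly when $j^\star$ is unique (a measure-zero set that does not affect the QP formulation).
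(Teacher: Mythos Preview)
Your proposal is correct and follows essentially the same derivation as the paper's own proof: direct computation of the gradients of $b_{i,I}$, $b_{i,\theta^v}$, and $b_{i,\mathrm{ca}}$, substitution of the linear class-$\mathcal{K}$ functions, and identification of the resulting affine coefficients. If anything you are more careful than the paper in explicitly invoking the boundary hypothesis to justify differentiating through the index set $\mathcal{V}_i(p)$, and your flagged sign discrepancy in $\chi_{1i}$ is a genuine observation (the gradient of $b_{i,\theta^v}$ indeed carries a minus sign that the stated $\chi_{1i}$ does not).
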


\begin{proof}
    The constraints in \eqref{eq:qp_controller} involve the terms 
    $\frac{\partial b_{i,I}}{\partial p_i}$, $\frac{\partial b_{i,I}}{\partial \phi_j}$, $\alpha_1(b_{i,I})$, 
    $\frac{\partial b_{i,\theta^v}}{\partial p_i}$, $\alpha_2(b_{i,\theta^v})$, 
    $\frac{\partial b_{i,\mathrm{ca}}}{\partial p_i}$, and $\alpha_3(b_{i,\mathrm{ca}})$.  
    The quantities $\xi_{1i}$, $\xi_{2i}$, $\chi_{1i}$, $\chi_{2i}$, $\rho_{1i}$, and $\rho_{2i}$ are derived as follows.
    
    The function $b_{i,I}$ in \eqref{eq:b_i,I} with $I_i$ from \eqref{eq:I_i} gives
    \begin{align*}
        \xi_{1i} &= \frac{\partial b_{i,I}}{\partial p_i} 
        = \delta_1 \sum_{j \in \mathcal{V}_i(p)} 
           \frac{\partial h_1(p_i,q_j)}{\partial p_i}\,\phi_j.
    \end{align*}
    Using \eqref{eq:phi_j_update_h1} and $\alpha_1(b_{i,I}) = a_1 b_{i,I}$, we have
    \begin{align*}
        \xi_{2i} 
        &= \left(\frac{\partial b_{i,I}}{\partial \phi_j}\right)\dot{\phi}_j + \alpha_1(b_{i,I}) = - a_1\,|\mathcal{V}_i(p)|\,\gamma 
        +\\[4pt]
        & \sum_{j \in \mathcal{V}_i(p)} 
           \delta_1 h_1(p_i,q_j)\,\phi_j
           \big(a_1 - \delta_1 h_1(p_i,q_j)\big).
    \end{align*}
    Thus,
    \[
        \left(\frac{\partial b_{i,I}}{\partial p_i}\right)^{\!\top}u_i 
        + \left(\frac{\partial b_{i,I}}{\partial \phi_j}\right)\dot{\phi}_j 
        + \alpha_1(b_{i,I})
        = \xi_{1i}^{\top} u_i + \xi_{2i}.
    \]

    From \eqref{eq:cbf_gimbal_pitch},
    \[
        \chi_{1i} = \frac{\partial b_{i,\theta^v}}{\partial p_i}
        = \big[\,0\ \ 0\ \ 0\ \ 0\ \ 2(\theta^v_i - \tfrac{\theta^v_{\min}+\theta^v_{\max}}{2})\,\big]^{\!\top},
    \]
    and substituting $\alpha_2(b_{i,\theta^v}) = a_2 b_{i,\theta^v}$ gives $\chi_{2i}$.  
    Hence,
    \[
        \left(\frac{\partial b_{i,\theta^v}}{\partial p_i}\right)^{\!\top}u_i + \alpha_2(b_{i,\theta^v})
        =\chi_{1i}^{\top} u_i + \chi_{2i}.
    \]

    The collision-avoidance function is defined in \eqref{eq:b_i,ca}.  
    Let $j^\star = \arg\min_{j \neq i} \|\mathrm{Pos}(p_i)-\mathrm{Pos}(p_j)\|^2$ be the closest neighbor.  
    Its derivative is
    \[
        \rho_{1i} = \frac{\partial b_{i,\mathrm{ca}}}{\partial p_i}
        = \big[\,2(\mathrm{Pos}(p_i)-\mathrm{Pos}(p_{j^\star}))^{\!\top}\ \ 0\ \ 0\,\big]^{\!\top},
    \]
    and substituting $\alpha_3(b_{i,\mathrm{ca}}) = a_3 b_{i,\mathrm{ca}}$ yields
    \[
        \rho_{2i} = a_3\big(\|\mathrm{Pos}(p_i)-\mathrm{Pos}(p_{j^\star})\|^2 - d^2\big).
    \]
    Thus,
    \[
        \left(\frac{\partial b_{i,\mathrm{ca}}}{\partial p_i}\right)^{\!\top}u_i + \alpha_3(b_{i,\mathrm{ca}})
        = \rho_{1i}^{\top} u_i + \rho_{2i}.
    \]

    \vspace{4pt}
    Collecting these inequalities yields the QP form
    \eqref{eq:qp_controller_final}, proving its equivalence to the original controller
    \eqref{eq:qp_controller}.
\end{proof}



\subsection{Map Feedback Activation}
\label{sec:map_feedback_activation}

In practice, we would like the drones to initially behave in an exploratory manner, scanning the entire field to gather rough information about the environment and generate a coarse 3D representation of the scene.  
As the mission progresses, their behavior should gradually shift toward refinement, where sampling effort is concentrated on regions requiring further improvement in reconstruction quality.  

To realize this behavior, we introduce a threshold $J_{\text{th}}$ on the global objective function $J$.  
When $J \geq J_{\text{th}}$, the importance index $\phi_j$ is updated only based on the drones' sensing performance \eqref{eq:phi_j_update_h1}.  
Once $J < J_{\text{th}}$, the map feedback is activated, and $\phi_j$ is additionally updated using feedback derived from mesh changes \eqref{eq:phi_j_update_h2}, encouraging the drones to revisit regions where the reconstruction is still evolving.


The switching rule for the map feedback gain $\delta_2$ is given by
\begin{equation}
\label{eq:delta2_switch}
    \delta_2 =
    \begin{cases}
    0 & \text{if } J \geq J_{\text{th}},\\[2pt]
    \delta_2' & \text{if } J < J_{\text{th}},
    \end{cases}
\end{equation}
where $\delta_2'>0$ determines the strength of the map feedback when activated. This mechanism allows the controller to smoothly shift its behavior from initial exploration to later refinement as coverage progresses.  
At present, we have not investigated a detailed theoretical guideline for selecting the threshold value $J_{\text{th}}$.  
Instead, in this work, we treat $J_{\text{th}}$ as a tunable design parameter and will provide empirical observations in Section~\ref{sec:sim_results} to illustrate the effects of different choices of $J_{\text{th}}$. The complete control algorithm is summarized in Algorithm~\ref{alg:overall}.


\begin{figure*}[t]
    \centering
    \includegraphics[width=0.75\textwidth]{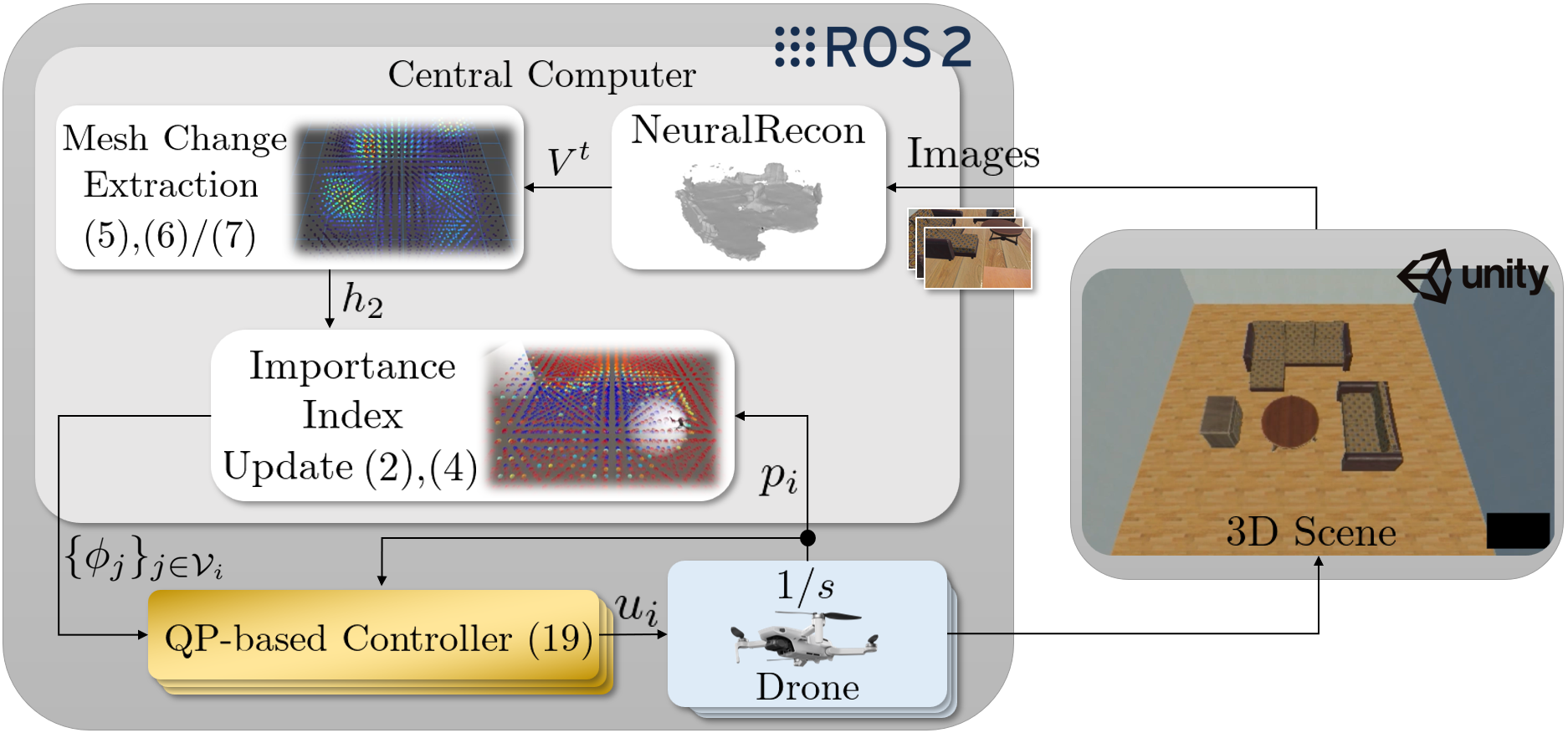}
    \caption{System architecture of Coverage-Recon. 
    The left block shows the controller framework implemented in ROS2, which is used in both simulation and real-world experiments. 
    In simulation, the real environment is replaced by a virtual 3D scene created in Unity (right block), which serves as the target to be reconstructed by the drone.}
    \label{fig:system_architecture}
\end{figure*}

\section{Simulation Results}
\label{sec:sim_results}

In this section, we verify the proposed Coverage-Recon algorithm through simulation using ROS2 middleware~\cite{macenski2022robot} and Unity for photorealistic rendering~\cite{juliani2018unity}. 
The results are 
presented in two parts: 
(i) a single-drone evaluation, which validates the impact of the proposed online map feedback mechanism by comparing the generated maps with baseline methods, both qualitatively and quantitatively; and  
(ii) a multi-drone evaluation, which demonstrates the scalability of the framework and its ability to complete missions more efficiently than single-drone operation.

\begin{figure*}[t]
    \centering
    \includegraphics[width=\textwidth]{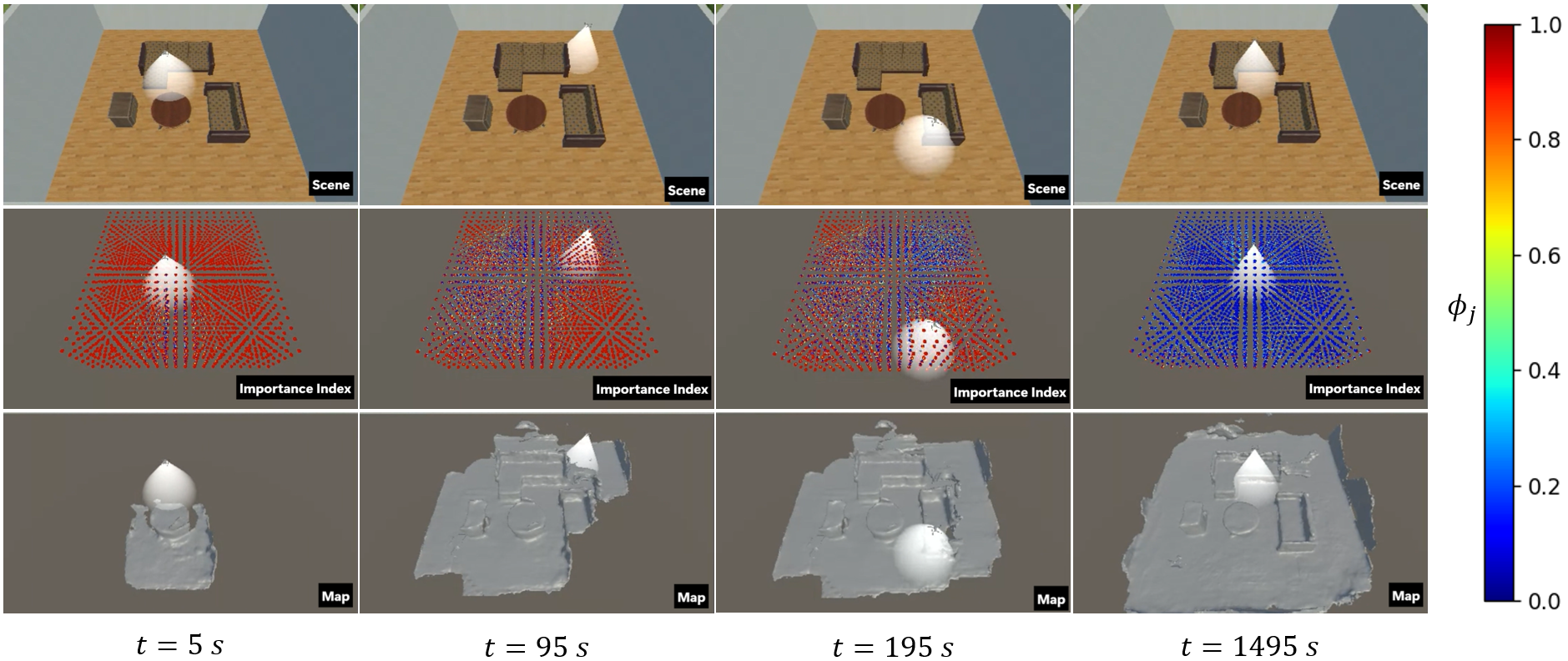}
    \caption{Simulation snapshots of the proposed Coverage-Recon framework with a single drone, using the 3D Grid method for mesh-change quantification and a threshold $J_{\text{th}} = 50\%$. 
    The top row shows the simulated scene, the middle row the importance index $\phi_j$ (red: high, blue: low), and the bottom row the reconstructed map. 
    A video of the simulation can be viewed \href{https://www.youtube.com/watch?v=FrcXFXxOk8A}{here}.}
    \label{fig:sim_snapshot}
\end{figure*}

\begin{figure}[t]
    \centering
    \includegraphics[width=1.0\columnwidth]{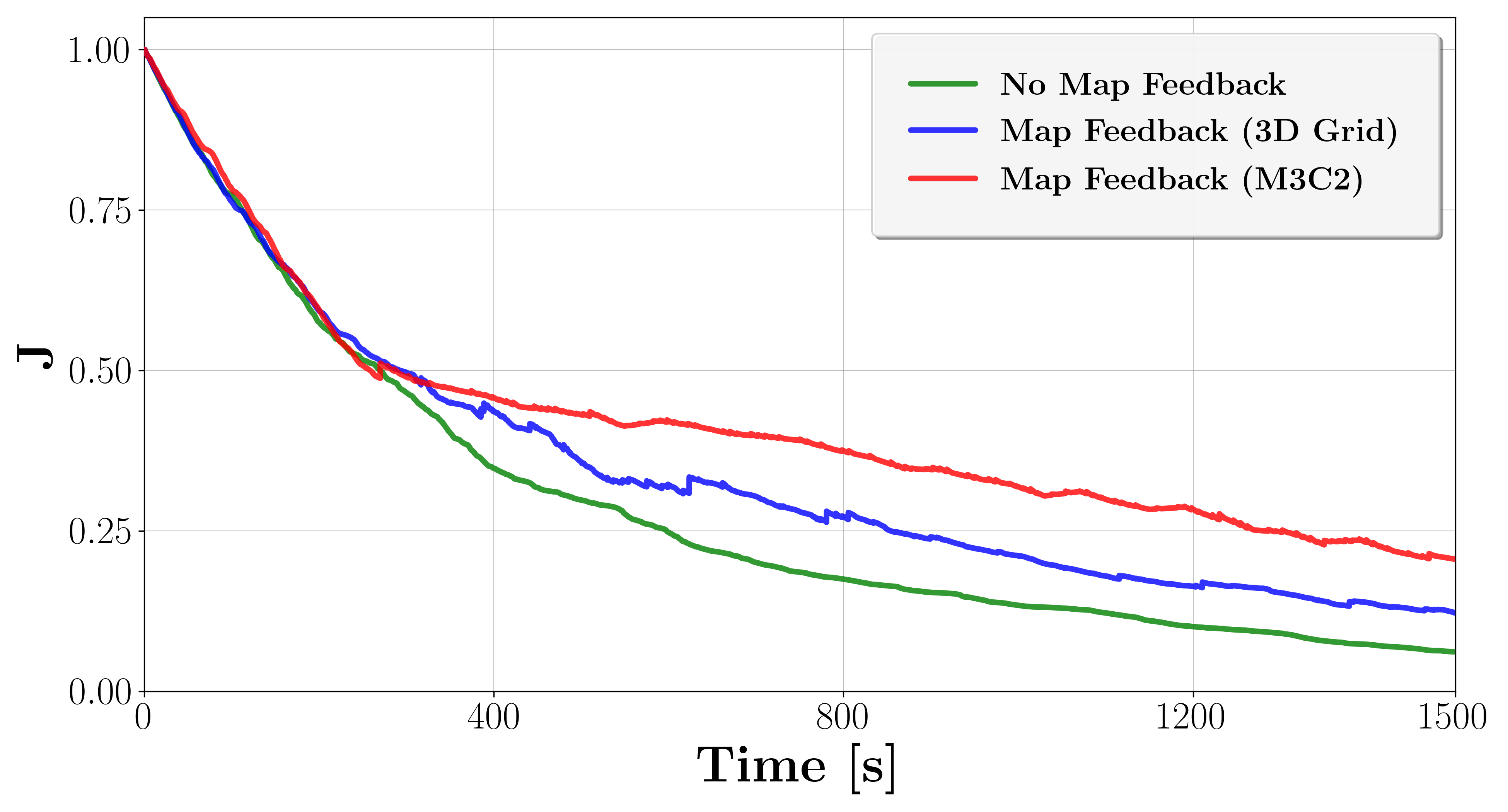}
    \caption{Evolution of the global objective $J$ comparing the no map feedback case, the 3D Grid map feedback method, and the M3C2 map feedback method.}
    \label{fig:j_evolution}
\end{figure}

\subsection{Simulation Setup}
\label{sec:sim_setup}

The simulation framework and implementation architecture are illustrated in Fig.~\ref{fig:system_architecture}. 
The system implementation follows a \textit{semi-distributed} architecture, where each drone executes the QP-based controller in \eqref{eq:qp_controller_final} locally, while a central computer handles global tasks such as generating a real-time 3D map of the scene, aggregating the evolving importance indices, and performing mesh change extraction. The mesh change extraction follows Sec.~\ref{sec:map_feedback}, using either the proposed 3D Grid method \eqref{eq:h2_grid} or the M3C2-based approach \eqref{eq:h2_m3c2} for comparison. 
All components are implemented on the ROS2 framework, which manages communication between the drones, the central computer, and the virtual environment.
The reconstruction scene is created in Unity, as shown in Fig.~\ref{fig:system_architecture}, featuring an indoor virtual 3D environment with a sofa, table, and wardrobe. This environment is connected to ROS2 through a Unity--ROS2 bridge, where rendered images are streamed to the NeuralRecon node for online reconstruction.

\begin{algorithm}
\caption{Coverage-Recon Algorithm}
\label{alg:overall}
\begin{algorithmic}[1]
\State Initialize $\delta_2 \gets 0$, generate initial 3D map
\While{system active}
    \State Compute $J$ using \eqref{eqn:j}
    \If{$J \geq J_{\text{th}}$} 
        \State Keep $\delta_2 = 0$ (map feedback inactive)
        \State Update $\phi_j$ using sensing performance: \eqref{eq:phi_j_update_h1}
    \Else
        \State Set $\delta_2 = \delta_2'$ (map feedback active)
        \State Update $\phi_j$ using \eqref{eq:phi_j_update_h1} and apply jump update: \eqref{eq:phi_j_update_h2}
    \EndIf
    \State Solve QP \eqref{eq:qp_controller_final} to compute $u_i$
    \State Update drone positions using $u_i$
    \State Continue image capture and reconstruction
\EndWhile
\end{algorithmic}
\end{algorithm}

The reconstruction target region $\mathcal{B}$ is defined as 
$\mathcal{B} = [-3.0,3.0]~\mathrm{m} \times [-3.0,3.0]~\mathrm{m} \times [0,2.0]~\mathrm{m}$, 
with the viewing angle space set to $\Theta_h \in [-\pi,\pi)$ and $\Theta_v \in [\pi/3,\pi/2]$. 
The virtual field $\mathcal{Q}$ is discretized into uniform cells of size $0.3~\mathrm{m} \times 0.3~\mathrm{m} \times 0.3~\mathrm{m} \times 0.3~\mathrm{rad} \times 0.3~\mathrm{rad}$, resulting in $m = 1.5\times10^7$ cells. 
The flight region $\mathcal{P}$ is bounded by $x_{\text{min}}=-4.0$~m, $x_{\text{max}}=4.0$~m, $y_{\text{min}}=-4.0$~m, $y_{\text{max}}=4.0$~m, $z_{\text{min}}=0.5$~m, and $z_{\text{max}}=5.0$~m. 
The parameter $d_{\min}$ in (\ref{eq:b_i,ca}) is set to 
$d_{\min}=1.0$~m.
In addition to the conditions 
(\ref{eq:cbf_gimbal_pitch}) and (\ref{eq:b_i,ca}), we add an additional inequality constraint to prevent drones from leaving the region $\mathcal{P}$.

The drone camera parameters are configured to match a DJI Mini 3, with a 35~mm equivalent focal length of 24~mm and an image resolution of 1920$\times$1080 pixels. 
The frame rate for image capture is fixed at 20~fps. 
Controller-related parameters for the sensing performance function are set to $D=1.0$, $\sigma_1=0.07$, $\sigma_2=0.095$, $\sigma_3=0.3$, and $\sigma_4=0.4$, while other control gains are $\delta_1=3.0$, $\delta_2'=1.0$, $\kappa=0.7$, $a_1 =1.0$, $a_2=1.0$, $a_3=1.0$ , $\epsilon=0.01$, and $\gamma=0.012$. 

To manage the computational load, multi-threaded processing is used for tasks such as mesh change detection and importance index updates. 
Simulations are executed on a workstation equipped with an Intel Core i9-14900KF CPU (24 cores, 32 threads, max 6~GHz), 64~GB RAM, and an Nvidia GeForce RTX 4090 GPU. 
Control updates and data exchanges are performed at 20~Hz to ensure real-time operation.

\subsection{Evaluation Metrics}
\label{sec:sim_metrics}

\begin{figure}[t]
    \centering
    \includegraphics[width=1.0\columnwidth]{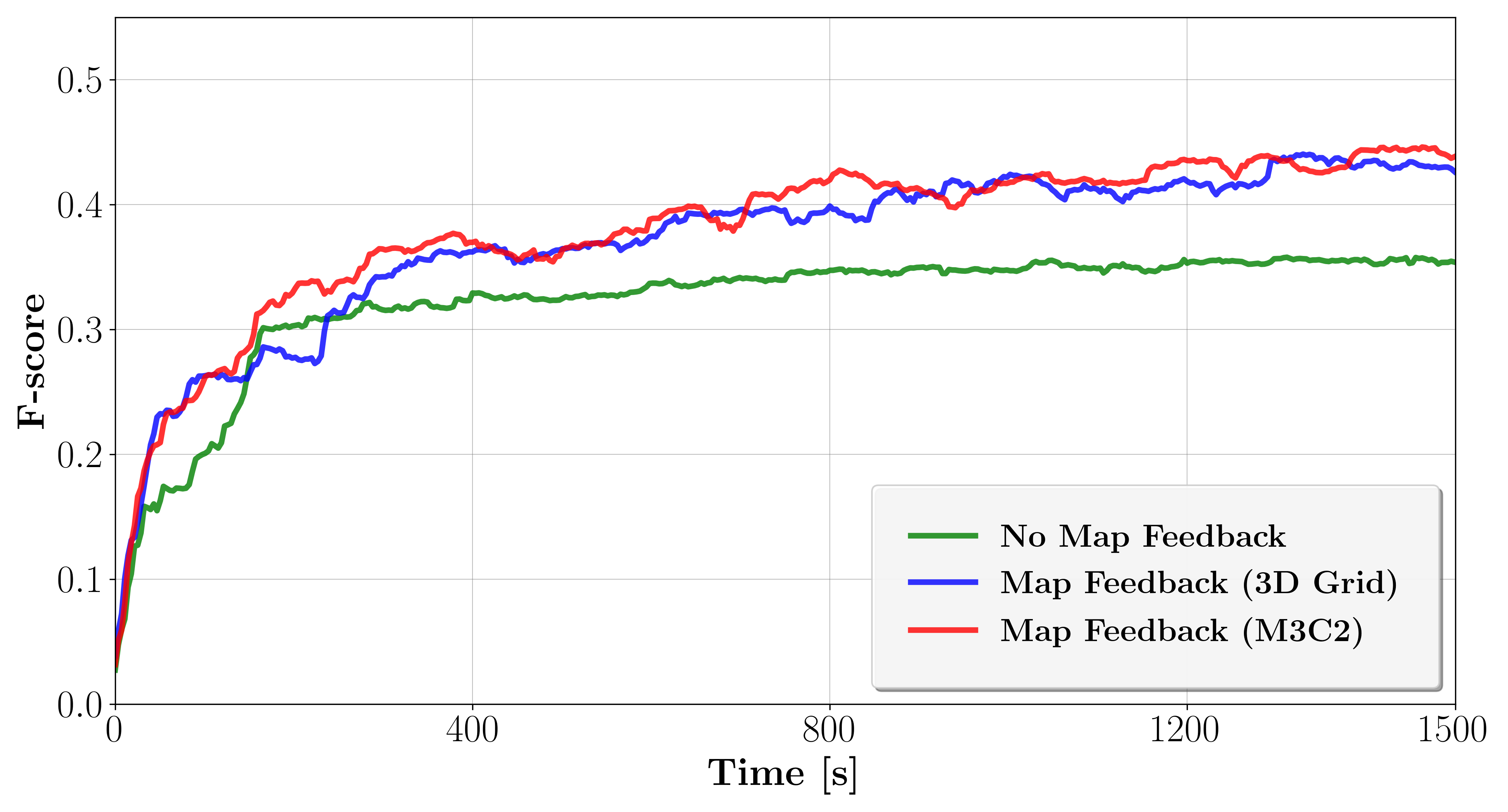}
    \caption{Evolution of the reconstruction F-score comparing the no map feedback case, the 3D Grid map feedback method, and the M3C2 map feedback method.}
    \label{fig:fscore_evolution}
\end{figure}

We evaluate the reconstructed mesh $\hat{\mathcal{G}}$ against a ground-truth mesh $\mathcal{G}$ using the F-Score metric, following the procedure in~\cite{murez2020atlas}. 
Let $d(\cdot,\cdot)$ denote the closest-point distance between two meshes. 
Precision and Recall are defined as 
$\mathrm{Precision} = |\{x \in \hat{\mathcal{G}} : d(x,\mathcal{G}) < \eta \}| / |\hat{\mathcal{G}}|$ 
and 
$\mathrm{Recall} = |\{y \in \mathcal{G} : d(y,\hat{\mathcal{G}}) < \eta \}| / |\mathcal{G}|$, 
where $\eta$ is the distance threshold for determining a correct correspondence.  
The F-Score is then given by
\begin{equation*}
    \mathrm{F\mbox{-}Score}(\eta) = 
    \frac{2 \cdot \mathrm{Precision} \cdot \mathrm{Recall}}
    {\mathrm{Precision} + \mathrm{Recall}}.
\end{equation*}
In our evaluation, we set $\eta = 0.05~\mathrm{m}$, as it is commonly used in 3D reconstruction benchmarks~\cite{murez2020atlas}.

The Unity mesh used as ground truth is intentionally sparse for rendering efficiency, whereas NeuralRecon typically produces a denser mesh. 
To avoid biasing Precision/Recall due to vertex-density mismatch, we uniformly resample the ground-truth mesh using \emph{Meshlab}'s \emph{Uniform Mesh Resampling} tool~\cite{cignoni2008meshlab}, which applies volumetric resampling followed by Marching Cubes reconstruction~\cite{lorensen1998marching}. 
This ensures that the ground-truth mesh has a comparable vertex density to NeuralRecon's output, enabling fair evaluation of accuracy.

\subsection{Single-Drone Evaluation and Baseline Comparison}
\label{sec:sim_single}

We first conduct single-drone evaluations to focus on the impact of the online map feedback mechanism in the proposed Coverage-Recon framework on the quality of the reconstructed 3D map. 
This subsection consists of two parts.  
First, we compare the proposed Coverage-Recon algorithm using the 3D Grid method~\eqref{eq:h2_grid} and the M3C2 method~\eqref{eq:h2_m3c2} against the case with no map feedback, which corresponds to setting $\delta_2 = 0$ throughout the mission. 
Next, we benchmark our method against representative baseline planners~\cite{torres2016coverage, dan2020control, shimizu2021angle, lu2024angle} to demonstrate the advantages of integrating online map feedback into the coverage control framework. 

\begin{figure*}[t]
    \centering
    \includegraphics[width=0.95\textwidth]{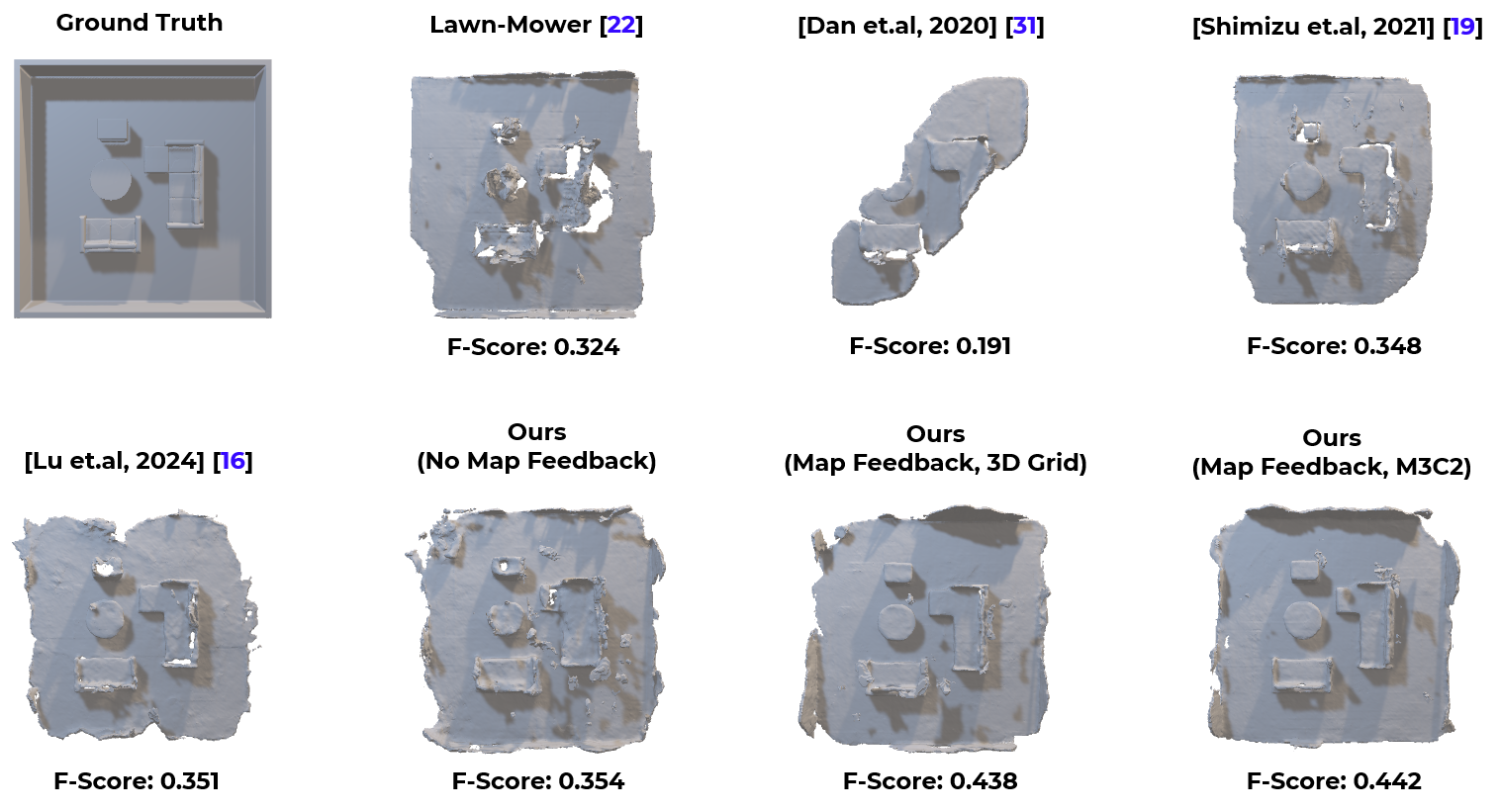}
    \caption{Final reconstruction meshes for different methods and parameter settings. The top-left image shows the ground truth, and the F-Score shown below each result indicates its reconstruction accuracy compared to the ground truth.}
    \label{fig:mesh_jth}
\end{figure*}

\vspace{6pt}
\noindent\textbf{1) Map Feedback vs. No Map Feedback.}  
We first evaluate the effect of online map feedback on reconstruction quality by comparing three cases: (i) a controller without map feedback ($\delta_2 = 0$), (ii) the proposed Coverage-Recon algorithm using the 3D Grid feedback mechanism~\eqref{eq:h2_grid} with $J_{\text{th}} = 50\%$, and (iii) the Coverage-Recon algorithm using the higher-resolution M3C2 feedback mechanism~\eqref{eq:h2_m3c2} with the same threshold.  
The threshold value $J_{\text{th}} = 50\%$ is empirically chosen through trial and error, while systematic selection of this parameter is left as future work.  
A single drone ($n=1$) is deployed from the same initial position $p_i = [0.0\ 1.0\ 2.0]^T \, \mathrm{m}$ in all cases.

Figure~\ref{fig:sim_snapshot} shows snapshots of the simulation using the map feedback controller with 3D Grid method.  
At $t = 5 \, \mathrm{s}$, the entire field is almost red with $\phi_j = 1$, indicating that no regions have been observed.  
As the mission progresses, $\phi_j$ decreases and shifts toward blue as more areas are captured, reaching near zero by $t = 1495 \, \mathrm{s}$.  
The bottom row illustrates the reconstruction process, where the mesh becomes progressively more complete and detailed over time.

The temporal evolution of the global objective $J(t)$ and the reconstruction accuracy F-Score$(t)$ for the three cases are presented in Figs.~\ref{fig:j_evolution} and~\ref{fig:fscore_evolution}, respectively.  
In Fig.~\ref{fig:j_evolution}, the controller without map feedback produces a monotonic decrease of $J$, while controllers with map feedback show discrete jumps whenever significant mesh changes are observed after $J \leq J_{\text{th}}$.  
These jumps gradually diminish as the mission approaches completion, with all settings converging near $J = 0$ at $t = 1500 \, \mathrm{s}$.  
From Fig.~\ref{fig:fscore_evolution}, we observe that both controller with map feedback consistently improves reconstruction accuracy, achieving higher F-Score values than the no-feedback case.  

The final reconstruction meshes for each setting are shown in Fig.~\ref{fig:mesh_jth}.  
Without map feedback, the reconstructed scene shows some missing pieces and less refined details compared to the versions with map feedback.  
As map feedback is introduced, the reconstruction quality improves, resulting in meshes that more closely resemble the ground truth, as also reflected in higher F-Score values.  

The M3C2 method achieves a slightly higher final F-Score of 0.442 compared to 0.438 for the 3D Grid method, owing to its finer geometric resolution. 
However, this improvement comes at the cost of a much higher computational load, with an average update time of 1.8585~s per cycle versus only 0.2131~s for the 3D Grid method.  
The lightweight 3D Grid approach provides nearly identical reconstruction quality while keeping feedback computations efficient, making it more practical for real-time multi-drone deployment.  
Accordingly, the map feedback mechanism employing the 3D Grid method is adopted for all subsequent simulations and experiments.

\vspace{6pt}
\noindent\textbf{2) Comparison with Baseline Planners.}  
We next compare the Coverage-Recon algorithm using the 3D Grid feedback method against several common baseline strategies.
First, we consider the lawnmower method, implemented similarly to~\cite{torres2016coverage}, where the drone follows a preset sweeping path with a fixed downward-facing camera at $90^\circ$ (See Figure~\ref{fig:lawn_mower}).
We also compare with other coverage-control-based approaches~\cite{dan2020control, shimizu2021angle, lu2024angle}, all operating without map feedback.
For the cases of~\cite{dan2020control} and~\cite{shimizu2021angle}, the drones also use fixed downward-facing cameras at $90^\circ$.

Figure~\ref{fig:mesh_jth} shows the final reconstructed meshes and the corresponding F-Score values.  
The lawnmower method~\cite{torres2016coverage} follows a preset sweeping path that provides basic coverage, but its fixed downward-facing camera limits viewpoint diversity, especially for complex structures, resulting in an incomplete reconstruction with an F-Score of 0.324.  
The persistent coverage method~\cite{dan2020control} yields the lowest F-Score of 0.191, as it continuously moves to maintain coverage but cannot guarantee that every point within the scene is visited.
This leads to repetitive motion along a narrow diagonal path, leaving many regions of the scene unobserved.  
The angle-aware coverage control with a fixed camera view~\cite{shimizu2021angle} addresses this issue by ensuring that every point in the scene is eventually visited, producing a higher F-Score of 0.348.  
Nevertheless, the fixed downward-facing camera still limits viewpoint diversity, making it difficult to accurately capture highly sloped surfaces such as furniture sides. The method of~\cite{lu2024angle} improves it by enabling active camera rotation, achieving an F-Score of 0.351, close to our approach without map feedback (0.354).
In general, our proposed method with map feedback achieves the highest reconstruction accuracy, with an F-Score of 0.438.
Unlike other approaches that only consider whether a point has been visited, the feedback from the ongoing reconstruction provides valuable information to identify regions with insufficient reconstruction, allowing the drone to focus on and improve those areas.

\begin{figure}[t]
    \centering
    \includegraphics[width=0.88\columnwidth]{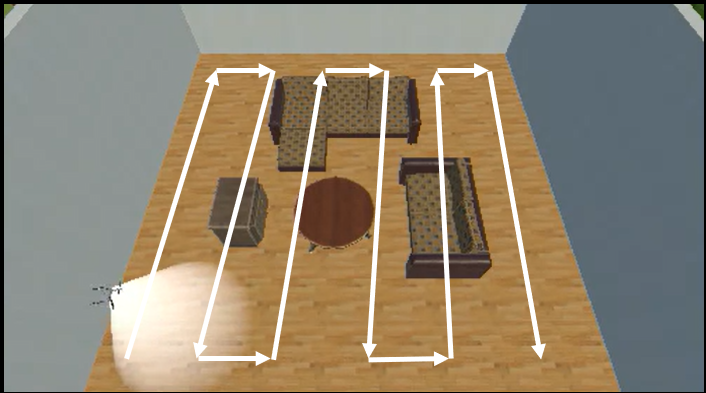}
    \caption{Illustration of the lawnmower path, similarly based on~\cite{torres2016coverage}, where the drone follows a preset sweeping trajectory with a fixed downward-facing camera at $90^\circ$. A demonstration video of this baseline strategy is available at \href{https://www.youtube.com/watch?v=hZplItVmM4M}{this link}.}
    \label{fig:lawn_mower}
\end{figure}

\subsection{Multi-Drone Evaluation}
\label{sec:sim_multi}

To demonstrate the scalability of the proposed framework, we employ the Coverage-Recon controller with the adopted 3D Grid feedback configuration ($J_{\text{th}} = 50\%$) and vary the number of drones $n \in \{1, 2, 4\}$.  
All other parameters, including drone speed, camera field of view, and safety constraints, are kept identical across team sizes to ensure a fair comparison.

As shown in Fig.~\ref{fig:multi_j}, using multiple drones significantly accelerates the decrease of the global objective $J(t)$, leading to faster mission completion.  
For example, $J \approx 0.25$ is reached at approximately $t = 850 \, \mathrm{s}$ with one drone, $t = 620 \, \mathrm{s}$ with two drones, and $t = 320 \, \mathrm{s}$ with four drones.  
The reconstruction accuracy, measured by the final F-Score, remains comparable across team sizes, with values of 0.438 for one drone, 0.422 for two drones, and 0.423 for four drones, indicating that the framework scales efficiently while preserving reconstruction quality.
These results indicate that Coverage-Recon can be extended to multi-drone systems, supporting cooperative exploration and more efficient image sampling in larger-scale reconstruction tasks.  
A video of the multi-drone simulation can be viewed \href{https://www.youtube.com/watch?v=M1L07WQFRPI}{here}.

\begin{figure}[t]
    \centering
    \includegraphics[width=0.98\columnwidth]{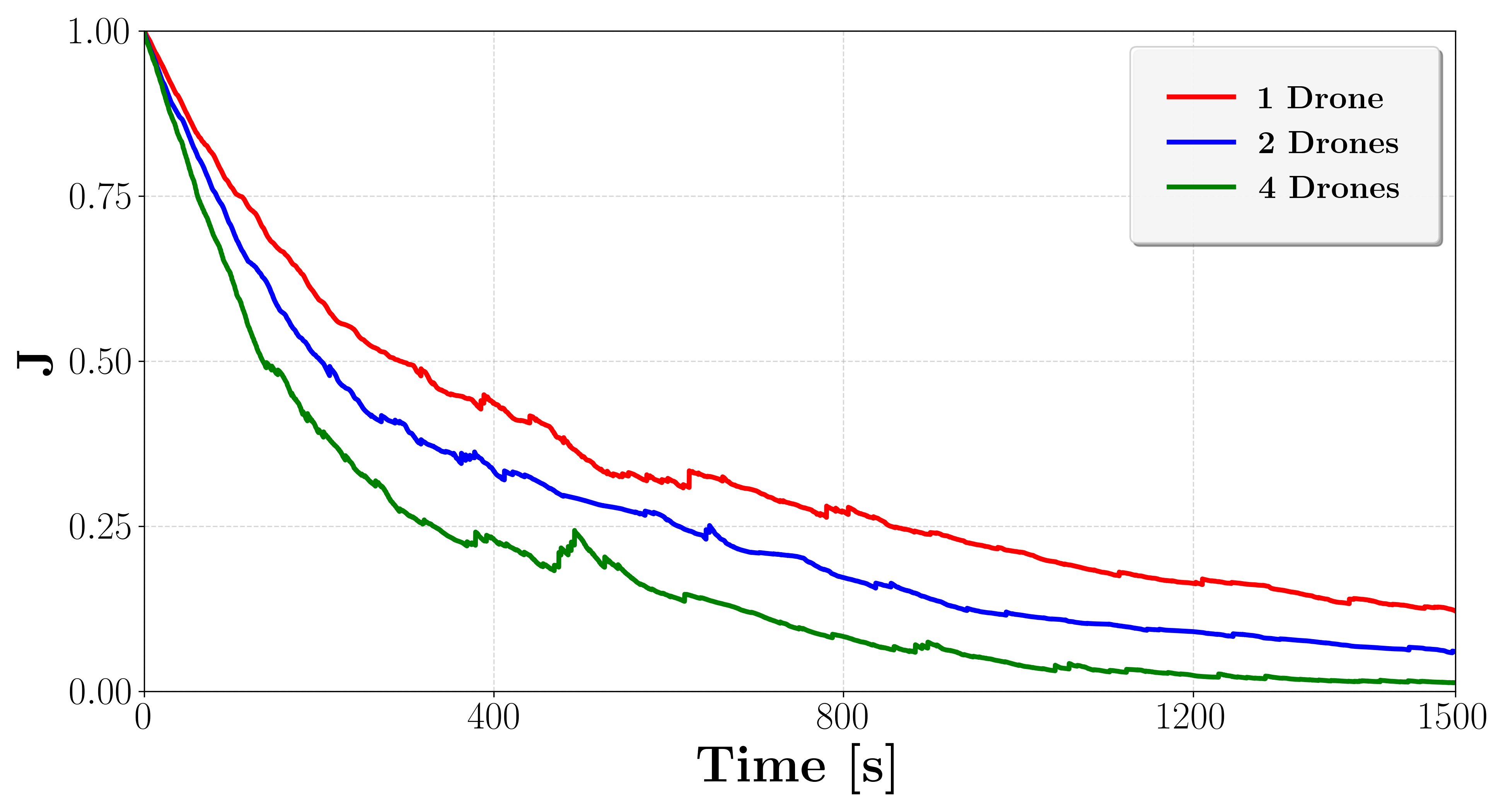}
    \caption{Evolution of the global objective $J$ for different drone team sizes ($n = 1, 2, 4$) using the Coverage-Recon controller with 3D Grid feedback and $J_{\text{th}} = 50\%$.}
    \label{fig:multi_j}
\end{figure}


\section{Experimental Results}
\label{sec:exp}

To validate the proposed Coverage-Recon framework, indoor flight experiments were conducted at the Science Tokyo Robot Zoo Sky testbed~\cite{suenaga2022experimental}.  
This section describes the experimental configuration and presents the results demonstrating the real-time feasibility and map quality advantage of the proposed framework.


\begin{figure*}[t]
    \centering
    \begin{minipage}[t]{0.48\textwidth}
        \centering
        \refstepcounter{figure}\addtocounter{figure}{-1} 
        \includegraphics[width=\linewidth]{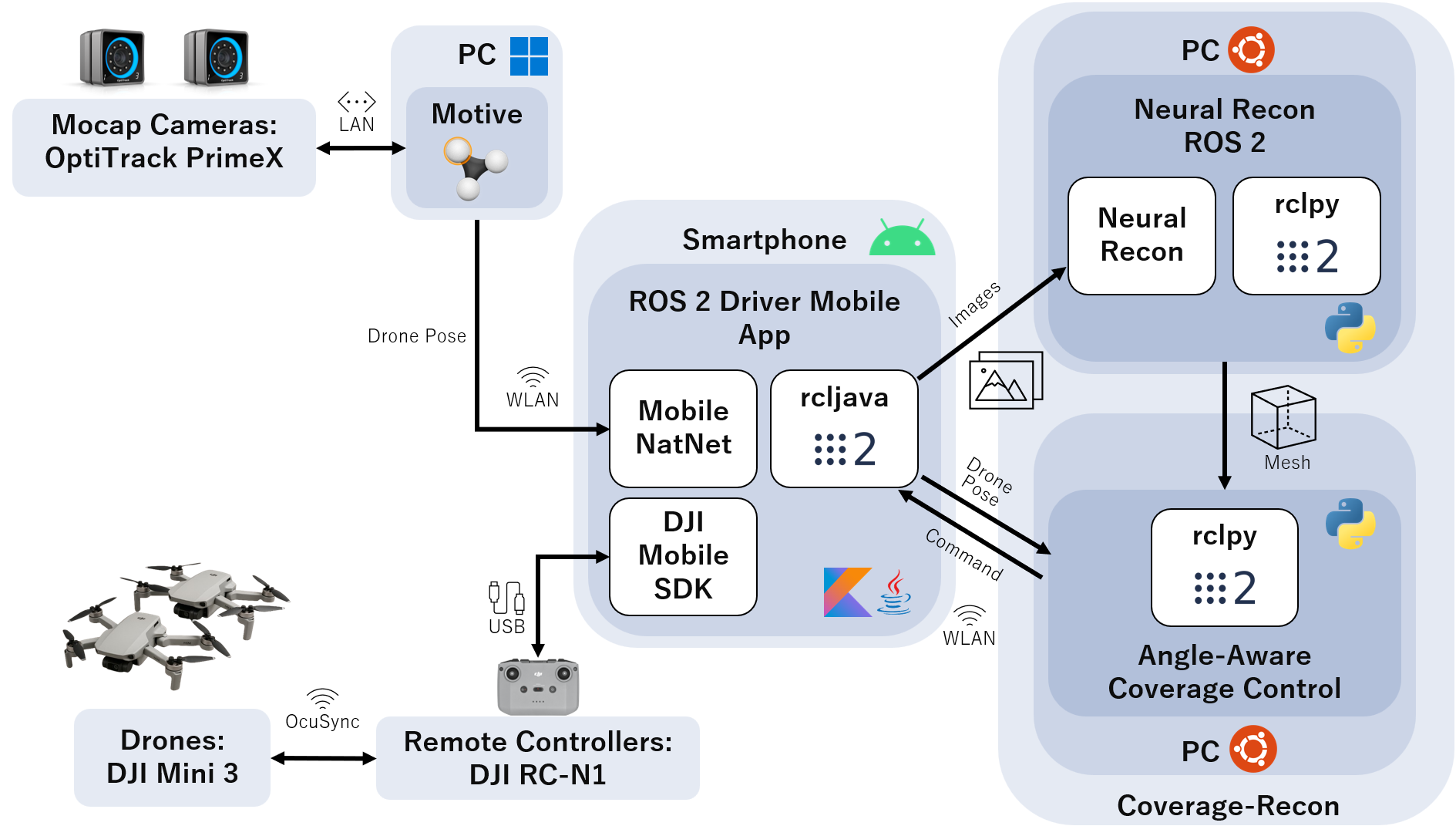}
        \label{fig:exp_setup_combined_a}
    \end{minipage}\hfill
    \begin{minipage}[t]{0.48\textwidth}
        \centering
        \refstepcounter{figure}\addtocounter{figure}{-1} 
        \includegraphics[width=\linewidth]{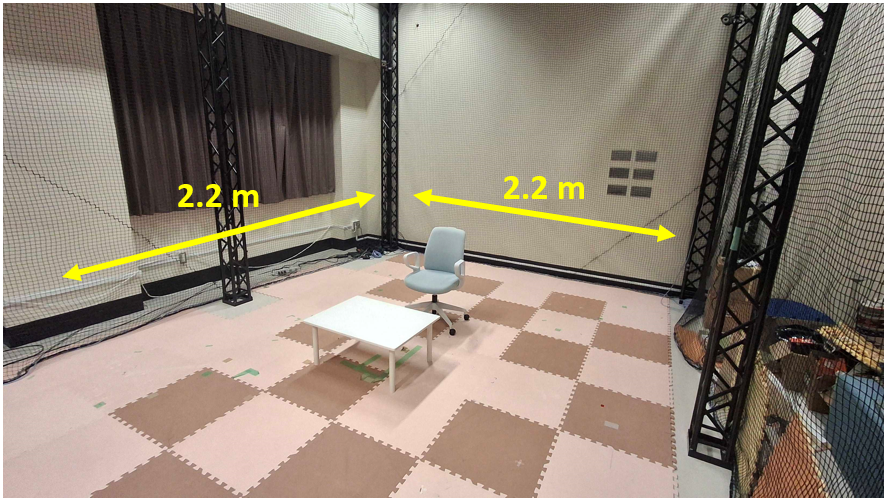}
        \label{fig:exp_setup_combined_b}
    \end{minipage}

    \caption{(a) Experimental configuration setup. (b) Science Tokyo Robot Zoo Sky testbed used for indoor flight experiments.}
    \label{fig:exp_setup_combined}
\end{figure*}

\subsection{Experimental Setup}
\label{sec:exp_setup}

The experimental configuration is shown in Fig.~\ref{fig:exp_setup_combined_a}(a). A DJI Mini 3 UAV with a forward-facing RGB camera is used for image capture and flight control. The UAV connects to a DJI RC-N1 controller via OcuSync, which links to an Android smartphone over USB. The smartphone runs a custom ROS 2 driver built with the DJI Mobile SDK, bridging control commands, pose feedback, and images to the ROS 2 network. Motion capture (MoCap) data from an OptiTrack PrimeX system, offering sub-millimeter precision, is streamed via Motive and the NatNet protocol for accurate tracking. The entire framework runs on a PC equipped with an Intel Core i9-13900K CPU, 64\,GB RAM, and an NVIDIA RTX 4090 GPU.

The experimental field can be seen in Fig.~\ref{fig:exp_setup_combined_b}(b). Here, we also implement a flight-boundary region CBF to prevent the UAV from approaching the walls of the testing area. All other parameters are set the same as in the simulation, except for the reconstruction region $\mathcal{B}$, the flight boundary $\mathcal{P}$, the maximum input $u_i^{\max}$, and related physical limits. In this experiment, we compare two configurations: the baseline case without map feedback and the proposed Coverage-Recon method with map feedback using the 3D Grid approach and $J_{\text{th}}=50\%$.

\subsection{Results and Analysis}
\label{sec:exp_results}

\begin{figure*}[t]
    \centering
    \includegraphics[width=\textwidth]{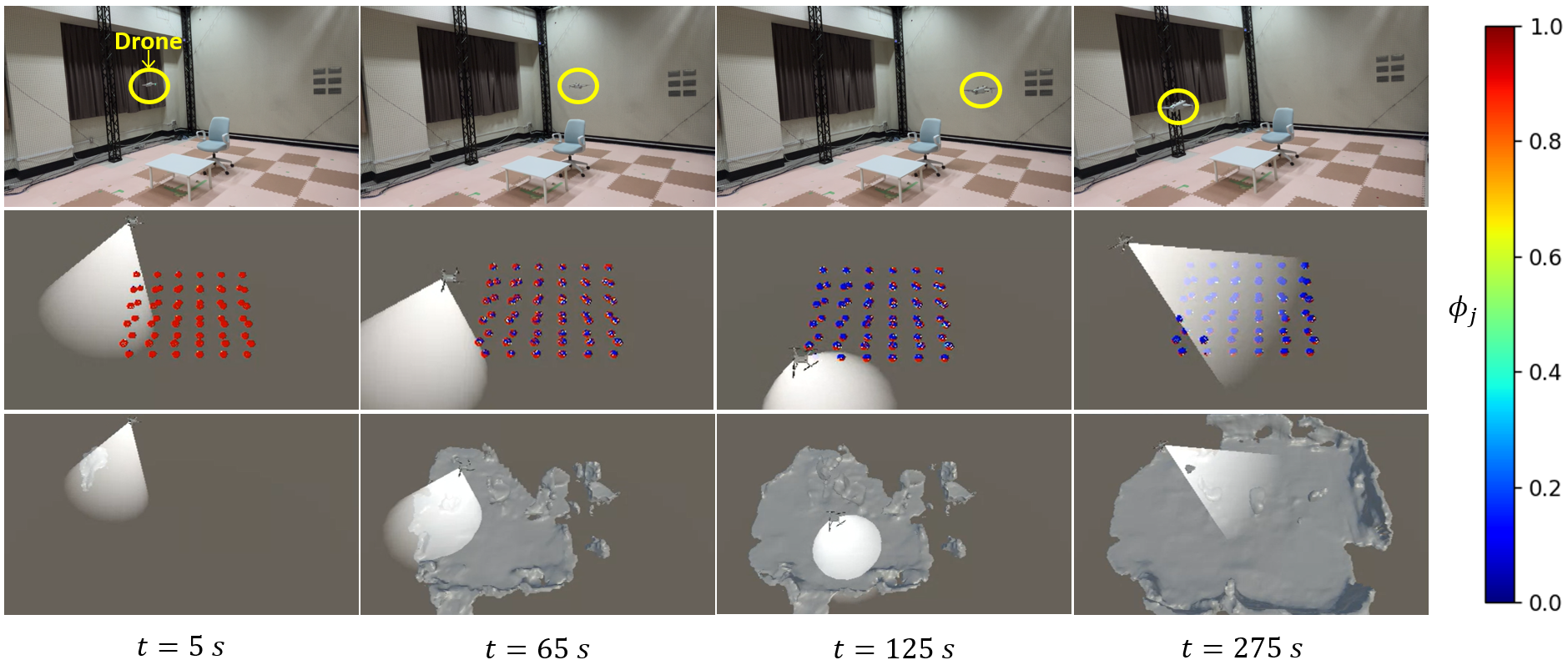}
    \caption{Experiment snapshots of the proposed Coverage-Recon framework with a single drone, using the 3D Grid method for mesh-change quantification and a threshold $J_{\text{th}} = 50\%$. 
    The top row shows the actual scene, the middle row the importance index $\phi_j$ (red: high, blue: low), and the bottom row the reconstructed map. 
    A video of the experiment can be viewed \href{https://www.youtube.com/watch?v=vX7Z6vx1rc8}{here}.}
    \label{fig:exp_snapshot}
\end{figure*}

The indoor flight experiment was conducted using a single drone to compare the proposed Coverage-Recon method with map feedback (3D Grid, $J_{\text{th}} = 50\%$) against the no-feedback case.  
Snapshots of the experiment are shown in Fig.~\ref{fig:exp_snapshot}.  
As the drone flies and observes the environment, the importance indices $\phi_j$ gradually shift from red to blue, indicating that previously unobserved regions are being covered.  
Meanwhile, the reconstructed mesh is iteratively updated and becomes progressively more complete as the mission proceeds.  
The final reconstruction results for both methods are presented in Fig.~\ref{fig:exp_mesh}.  
Qualitatively, the proposed method with map feedback produces a more refined and complete reconstruction, particularly around the table and chair, closely matching the actual scene.  
In contrast, the no-feedback case results in a coarser and less complete model with missing surfaces.  
These results confirm that incorporating online map feedback effectively guides the UAV toward under-observed areas, improving overall scene coverage and reconstruction quality in real time.

\begin{figure*}[t]
    \centering
    \includegraphics[width=0.9\textwidth]{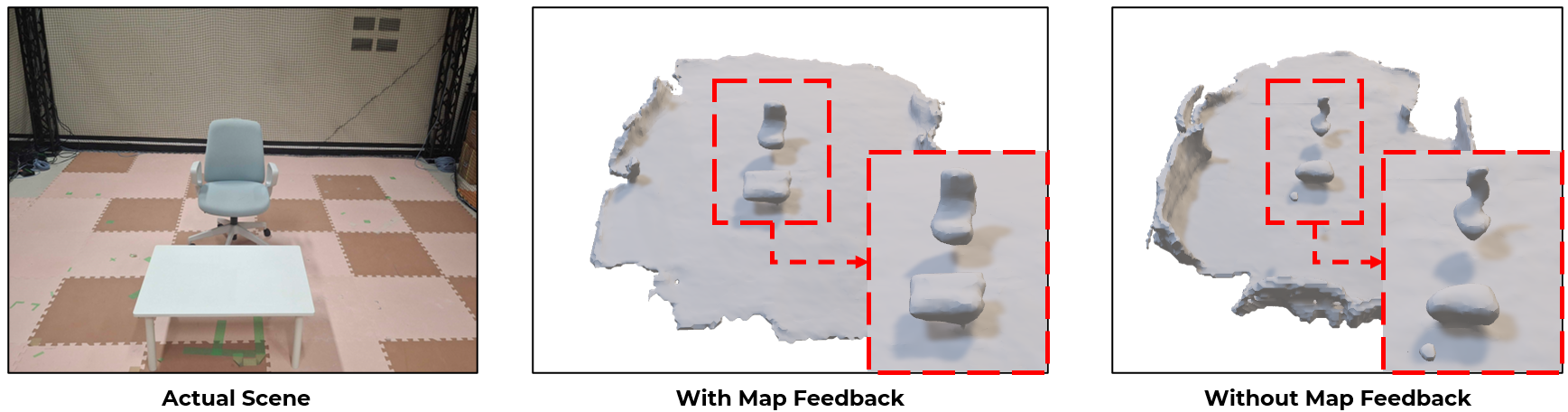}
    \caption{Final reconstruction meshes from the experimental verification. The proposed method with map feedback yields a more complete and refined reconstruction compared to the no-feedback case.}
    \label{fig:exp_mesh}
\end{figure*}


\section{Conclusion and Future Works}

The proposed \textit{Coverage-Recon} framework integrates online map feedback into coverage control to improve 3D reconstruction quality.  
Simulation results confirmed that the proposed algorithm with map feedback achieved better reconstruction quality than baseline approaches, as evidenced by higher F-Score values.  
Experiments further demonstrated the real-time feasibility and qualitative advantage of the proposed framework compared to the no-feedback case.  
Future work will explore adaptive feedback tuning and extensions toward radiance-field-based reconstruction methods such as Gaussian splatting.

\section*{Acknowledgments}
The authors would like to express their sincere gratitude to Mr. Daisuke Ichihashi and Dr. Kuniaki Uto for their valuable advice and support throughout this research.

\bibliographystyle{IEEEtran}
\bibliography{main}

\section{Biographies}

\begin{IEEEbiography}[{\includegraphics[width=1in,height=1.25in,clip,keepaspectratio]{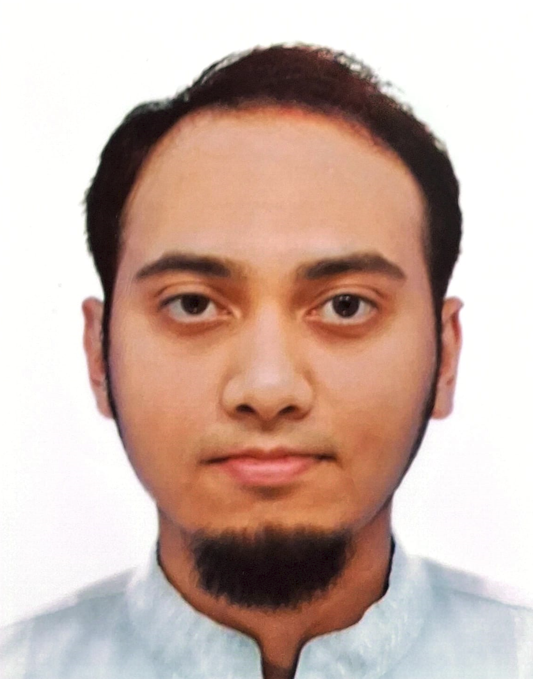}}]{Muhammad Hanif}
(Graduate Student Member, IEEE) received the B.S. degree in Electrical Engineering from Institut Teknologi Bandung, Indonesia, in 2018, and the M.Eng. and Ph.D. degrees in Systems and Control Engineering from the Institute of Science Tokyo (formerly Tokyo Institute of Technology), Japan, in 2022 and 2025, respectively. His research interests include multi-robot control and aerial robotics.
\end{IEEEbiography}

\begin{IEEEbiography}[{\includegraphics[width=1in,height=1.25in,clip,keepaspectratio]{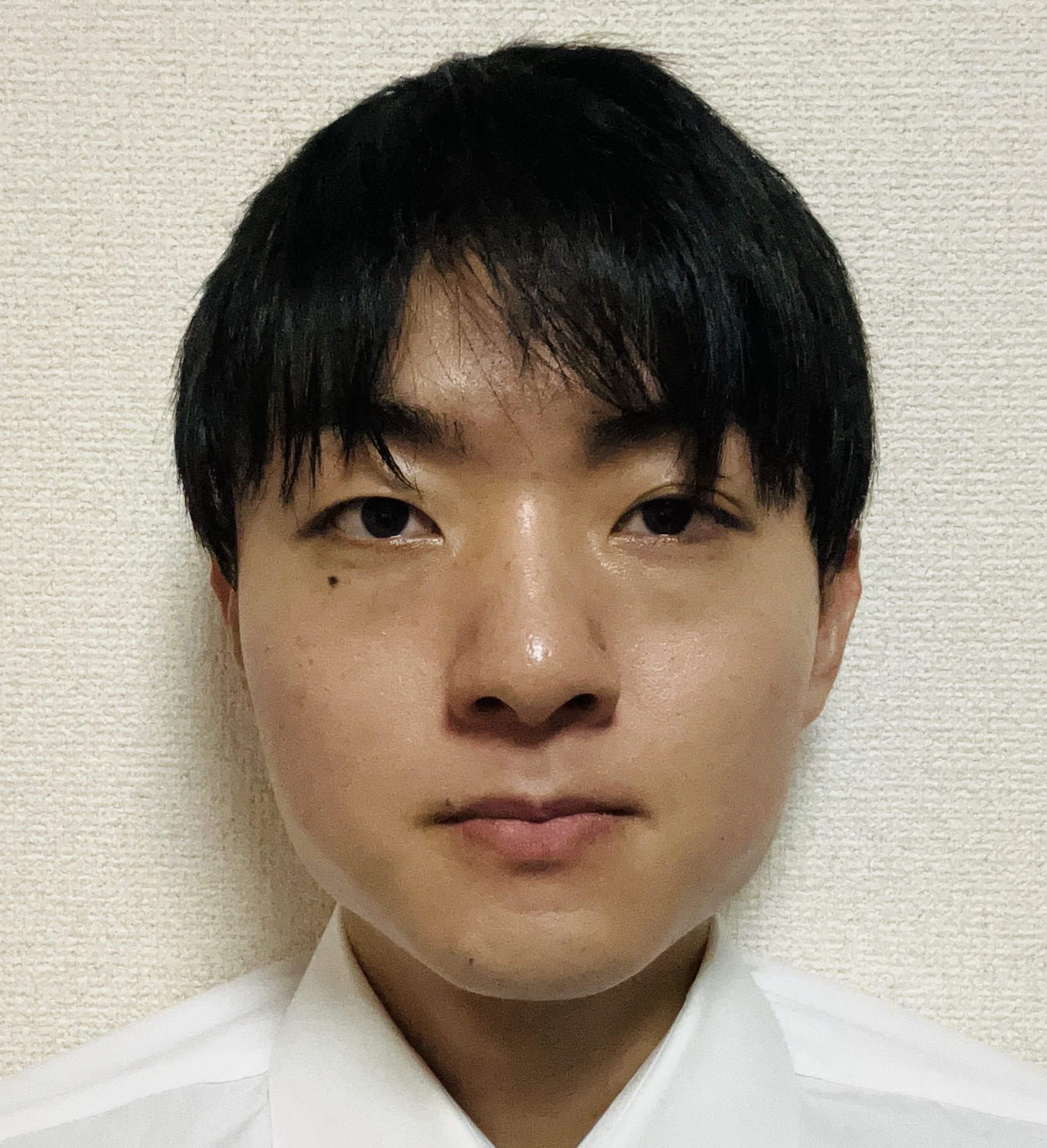}}]{Reiji Terunuma}
(Graduate Student Member, IEEE) received the B.Eng. and M.Eng. degrees in Systems and Control Engineering from the Institute of Science Tokyo (formerly Tokyo Institute of Technology), Japan, in 2023 and 2025, respectively, where he is currently pursuing the Ph.D. degree. His research interests include analysis and control of robotic systems.
\end{IEEEbiography}

\begin{IEEEbiography}[{\includegraphics[width=1in,height=1.25in,clip,keepaspectratio]{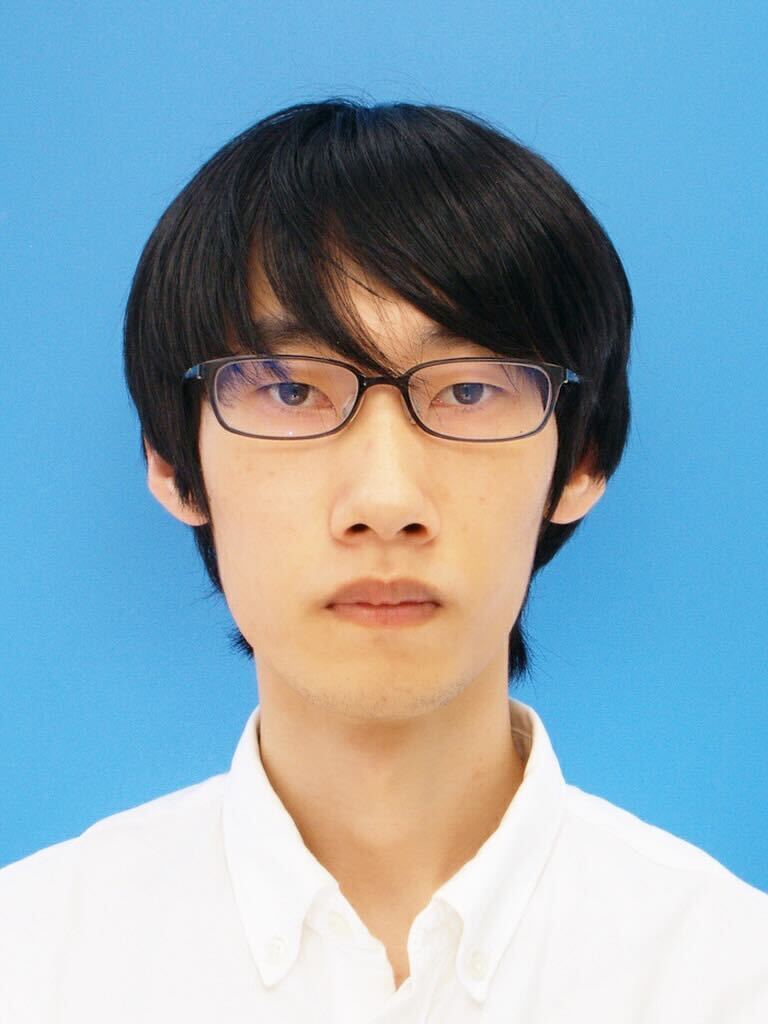}}]{Takumi Sumino}
received the B.Eng. and M.Eng. degree from the department of Systems and Control Engineering, Institute of Science Tokyo (formerly Tokyo Institute of Technology), Japan, in 2022 and 2024, respectively. His research interests include human-robot collaborations and networked robotics.
\end{IEEEbiography}


\begin{IEEEbiography}[{\includegraphics[width=1in,height=1.25in,clip,keepaspectratio]{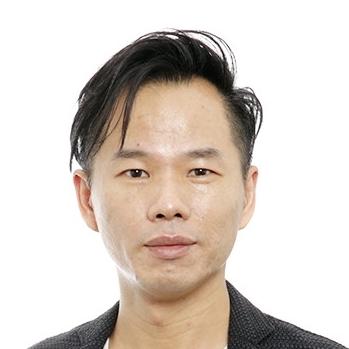}}]{Kelvin Cheng} received his Ph.D. degree in computer science from The University of Sydney, Australia in 2009. Since then he has worked at Australia's national scientific agency CSIRO and at the National University of Singapore, focusing on the area of Human-Computer Interaction. In 2017, he joined Rakuten Institute of Technology, part of Rakuten Group, Inc., and subsequently Rakuten Mobile Inc. from 2020. From 2018 to 2019, he was a visiting researcher with Waseda University, Japan, and from 2021 to 2024, he was a visiting researcher with The University of Tsukuba, Japan. He is currently an R\&D Manager and is actively creating future immersive consumer experiences and business use cases, together with a variety of cutting-edge technologies, and mobile technologies such as XR, 5G, Edge Computing, generative AI, drones, and autonomous robots/vechicles.
\end{IEEEbiography}

\begin{IEEEbiography}[{\includegraphics[width=1in,height=1.25in,clip,keepaspectratio]{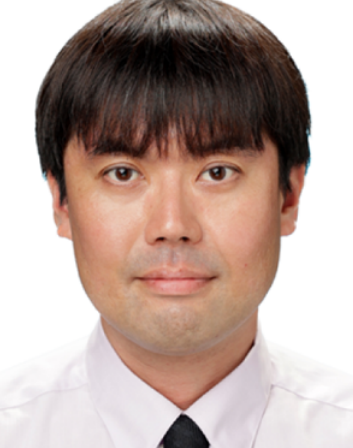}}]{Takeshi Hatanaka}
(Senior Member, IEEE) received the Ph.D. degree in applied mathematics and physics from Kyoto University, Kyoto, Japan, in 2007. He was a Faculty Member with Tokyo Institute of Technology, Tokyo, Japan, and Osaka University, Suita, Japan. Since October 2024, he has been a Professor with the Institute of Science Tokyo. He is the co-author of the \textit{Passivity-Based Control and Estimation in Networked Robotics} (Springer, 2015). His research interests include cyber-physical-human systems.

Dr. Hatanaka was a recipient of the Kimura Award in 2017, Pioneer Award in 2014, Pioneer Technology Award in 2024, the Outstanding Book Award in 2016, the Control Division Conference Award in 2018, the Takeda Prize in 2020, and the Outstanding Paper Awards in 2009, 2015, 2020, 2021, 2023, and 2025 all from The Society of Instrumental and Control Engineers, and the IFAC 2023 Application Paper Prize Finalist. He is serving/served as the Deputy EiC for the \textit{Annual Reviews in Control}, a Senior Editor and an Associate Editor for \textit{IEEE Transactions on Control Systems Technology}, the \textit{Mechatronics}, the \textit{Advanced Robotics}, and an IEEE CSS Conference Editorial Board Member.
He is a Co-general Chair of IFAC CPHS 2026.
\end{IEEEbiography}

 




\vfill

\end{document}